\documentclass[twoside,11pt]{article}

\usepackage{jmlr2e}
\usepackage[cmex10]{amsmath}
\usepackage{amsfonts, bm}
\usepackage{array}
\usepackage{color}
\usepackage{algorithm, algorithmic}
\usepackage[active]{srcltx}
\usepackage{subfig}
\usepackage{hyperref}
\usepackage{rotating}

\def\norm#1{\|#1\|}


\def\R{{\mathbb R}}

\def\be{\begin{eqnarray*}}
\def\ee{\end{eqnarray*}}
\def\beq{\begin{equation}}
\def\eeq{\end{equation}}

\def\2q{\quad\quad}

\def\R{{\mathbb R}}

\def\:{{\,:\,}}

\def\norm#1{{\left\|\,#1\,\right\|}}
\def\abs#1{{\left|\,#1\,\right|}}

\def\norm #1{\|#1\|}
\def\abs #1{|#1|}

\def\define{:=}
\def\inprod#1#2{\langle #1,\,#2\rangle}

\def\itb{\begin{itemize}}
\def\ite{\end{itemize}}
\def\bit{\begin{itemize}}
\def\eit{\end{itemize}}
\def\dom{\hbox{dom}}



\begin{document}
 
\title{The Bregman-Tweedie Classification Model}

\author{\name Hyenkyun Woo \email hyenkyun@koreatech.ac.kr, hyenkyun@gmail.com \\
       }

\editor{}

\maketitle

\begin{abstract}
This work proposes the Bregman-Tweedie classification model and analyzes the domain structure of the extended exponential function, an extension of the classic generalized exponential function with additional scaling parameter, and related high-level mathematical structures, such as the Bregman-Tweedie loss function and the Bregman-Tweedie divergence. The base function of this divergence is the convex function of Legendre type induced from the extended exponential function. The Bregman-Tweedie loss function of the proposed classification model is the regular Legendre transformation of the Bregman-Tweedie divergence. This loss function is a polynomial parameterized function between unhinge loss and the logistic loss function. Actually, we have two sub-models of the Bregman-Tweedie classification model; H-Bregman with hinge-like loss function and L-Bregman with logistic-like loss function.  Although the proposed classification model is nonconvex and unbounded, empirically, we have observed that the H-Bregman and L-Bregman outperform, in terms of the Friedman ranking, logistic regression and SVM and show reasonable performance in terms of the classification accuracy in the category of the binary linear classification problem. 
\end{abstract}

\begin{keywords}
Extended exponential function, convex function of Legendre type, Bregman-Tweedie  divergence, Bregman-Tweedie classification model, hinge loss, logistic loss.
\end{keywords}

\section{Introduction\label{sec1}}
The exponential function is an essential and fundamental function while explaining various data dependent problems appearing in machine learning, such as regression, classification and clustering. However, the observed data is not always well explained through the classic exponential function and thus the generalization of this function is required. The well-known generalization is the $\alpha$-exponential function~\citep{tsallis09}, defined as $\overline{\exp_{\alpha}}(x) = (1 + (1-\alpha)x)^{\frac{1}{1-\alpha}}$ (or $\max(0,1+(1-\alpha)x)^{\frac{1}{1-\alpha}}$). The corresponding generalized $\alpha$-logarithmic function is defined as $\overline{\ln_{\alpha}}(y) = \frac{y^{1-\alpha}-1}{1-\alpha}$. See \citep{amari16}, for more details on the generalized elementary function and its various applications. Note that mathematical structures, like domain and inverse relation, of these generalized elementary functions are not well studied. For instance, if $\alpha=1/3$, $\overline{\exp_{\alpha}}\circ\overline{\ln_{\alpha}}(-8) = 8$ and if $\alpha=2/3$, $\overline{\exp_{\alpha}}\circ\overline{\ln_{\alpha}}(-8) = -8$. Therefore, the characterization of domains satisfying the inverse relation (and the high-level structure like convex function of Legendre type) is highly demanded. 

Recently, \citep{ding10, ding11} have proposed $\alpha$-logistic regression model with the $\alpha$-exponential families. This is based on the generalized $\alpha$-exponential function to obtain robustness on the label noise in classification problem. However, when we try to directly generalize the logistic regression with the generalized logarithmic function and the generalized exponential function, because of lack of inverse relation and ambiguity of the domains, it is unclear how to generalize the classic logistic loss function to include the hinge-like loss function.

Interestingly, two decades ago, \citep{lafferty99} had suggested Bregman-beta divergence, which is induced from the beta-divergence, to study the generalized boosting model satisfying the additive structure of the adaboosting. As noticed in \citep{woo17}, the additive structure of the generalized boosting model is well-defined when the base function of the Bregman-beta divergence is a convex function of Legendre type~\citep{roc70, bauschke97}. Inspired from~\citep{lafferty99}, in this article, we study an extended exponential function (i.e., the generalized exponential function with additional scaling parameter) and fully characterize its domain structure. Also, we explore the high-level mathematical structures on the extended exponential function, for instance, the convex function of Legendre type and the Bregman-Tweedie divergence. The base function of the divergence is the convex function of Legendre type induced from the extended exponential function. Moreover, we propose the Bregman-Tweedie classification model, the loss function of it is the regular Legendre transformation of the Bregman-Tweedie divergence which is a dual formulation of the Bregman-beta divergence. For more details on these divergences, see \citep{woo17,woo18}.

The Bregman-Tweedie loss function is a polynomial parameterized loss function between the unhinge loss and the logistic loss function. Depending on the choice of the parameters, we have two sub-models of the Bregman-Tweedie classification model; H-Bregman with hinge-like loss function ($c_{\alpha}=1$) and L-Bregman with logistic-like loss function ($c=1$). Note that, very recently, we have suggested a general framework of the convex classification model, Logitron~\citep{woo19}. This is the Perceptron-augmented extended logistic regression model. As opposed to the Logitron loss function, the proposed Bregman-Tweedie  loss function is non-convex and unbounded below. However, by virtue of the projection-based optimization~\citep{mark19} and rescaling of the data space, the proposed Bregman-Tweedie classification model shows reasonable performance in terms of classification accuracy and outperforms logistic regression and SVM, in terms of Friedman ranking, when $\alpha \approx 1$. We have used the state-of-the-art linear classification benchmark package; logistic regression, SVM, and L2SVM in LIBLINEAR~\citep{fan08} and the several dozens of UCI benchmark dataset in the category of binary classification. Last but not least, the Bregman-Tweedie divergence is an essential function while characterizing the structure of the Tweedie exponential dispersion model~\citep{jorgensen97,bar-lev86}. See \citep{woo18} for more details on the moment-limited statistical distribution including Tweedie exponential dispersion model. 

Before we introduce the extended exponential function and the Bregman-Tweedie classification model, several useful notations, frequently used in this article, are introduced. $\R_+ = \{ x \in \R \;|\; x \ge 0 \}$, $\R_{++} = \{ x \in \R \;|\; x > 0 \}$, $\R_{-} = \{ x \in \R \;|\; x \le 0 \}$, and $\R_{--} = \{ x \in \R \;|\; x < 0 \}$. ${\mathbb Z}$ is a set of integer and ${\mathbb Z}_+ = \{0,1,2,... \}$. The following categorization~\citep{woo17} of real line $\R$ is indispensable while clarifying the domain and the range of the extended exponential function and the high-level mathematical structures, including the Bregman-Tweedie classification model.
\begin{equation}\label{Rclass} 
\begin{array}{l}
\R_e = \{ 2k/(2l+1) \;|\; k,l \in \mathbb{Z} \}\\
\R_o = \{ (2k+1)/(2l+1) \;|\; k,l \in \mathbb{Z} \}\\
\R_{x} = \R \setminus (\R_o \cup \R_e)
\end{array}
\end{equation}
where $\R_x$ can be divided into two sub-categories:
$\R_{xe} = \left\{ (2k+1)/2l \;|\; k \in \mathbb{Z},\; l \in \mathbb{Z}\setminus\{0\} \right\}$,  
$\R_{xx} = \R_x \setminus \R_{xe}$. The inverse relations are also useful. $\R_o^{-1} =  \R_o$,
$\R_{xe} = (\R_e\setminus\{ 0 \})^{-1 }$, 
$\R_{xx} = \R_{xx}^{-1}$ 
where, for instance, $(\R_e\setminus\{ 0 \})^{-1} =  \R_{xe}$ means that, for all $a \in (\R_e \setminus \{ 0 \})^{-1}$, we have $a^{-1} \in \R_{xe}$ and vice versa. Note that $bd(\Omega)$ is a boundary of $\Omega$ and $int(\Omega)$ is an interior of $\Omega$. $\inprod{a}{b} = \sum_{i=1}^n a_ib_i$ where  $a,b \in \R^n$. In addition, we assume that the domain of a function is a {\it convex set}, irrespective of the convexity of the function. 

We briefly overview the organization of this work. In Section \ref{sec2}, we introduce the extended exponential function and the Bregman-Tweedie divergence. In Section \ref{sec3}, the Bregman-Tweedie loss function and the Bregman-Tweedie classification model are studied based on the Bregman-Tweedie divergence. The numerical experiments of the Bregman-Tweedie classification model in binary classification problem is presented in Section \ref{sec4}. The conclusions are given in Section \ref{sec6}. 

\section{The extended exponential function and the Bregman-Tweedie divergence~\label{sec2}}
This Section presents the extended exponential function~\citep{woo19} and the high-level mathematical structures based on it. That is, the convex function of Legendre type (the indefinite integral of the extended exponential function with the reduced domain) and the Bregman-Tweedie divergence.  Also, the domain and the range of the various extended exponential related functions are carefully analyzed with the category of the real line $\R$ in \eqref{Rclass}~\citep{woo17}.

\begin{figure*}[t]
\centering
\includegraphics[width=5.5in]{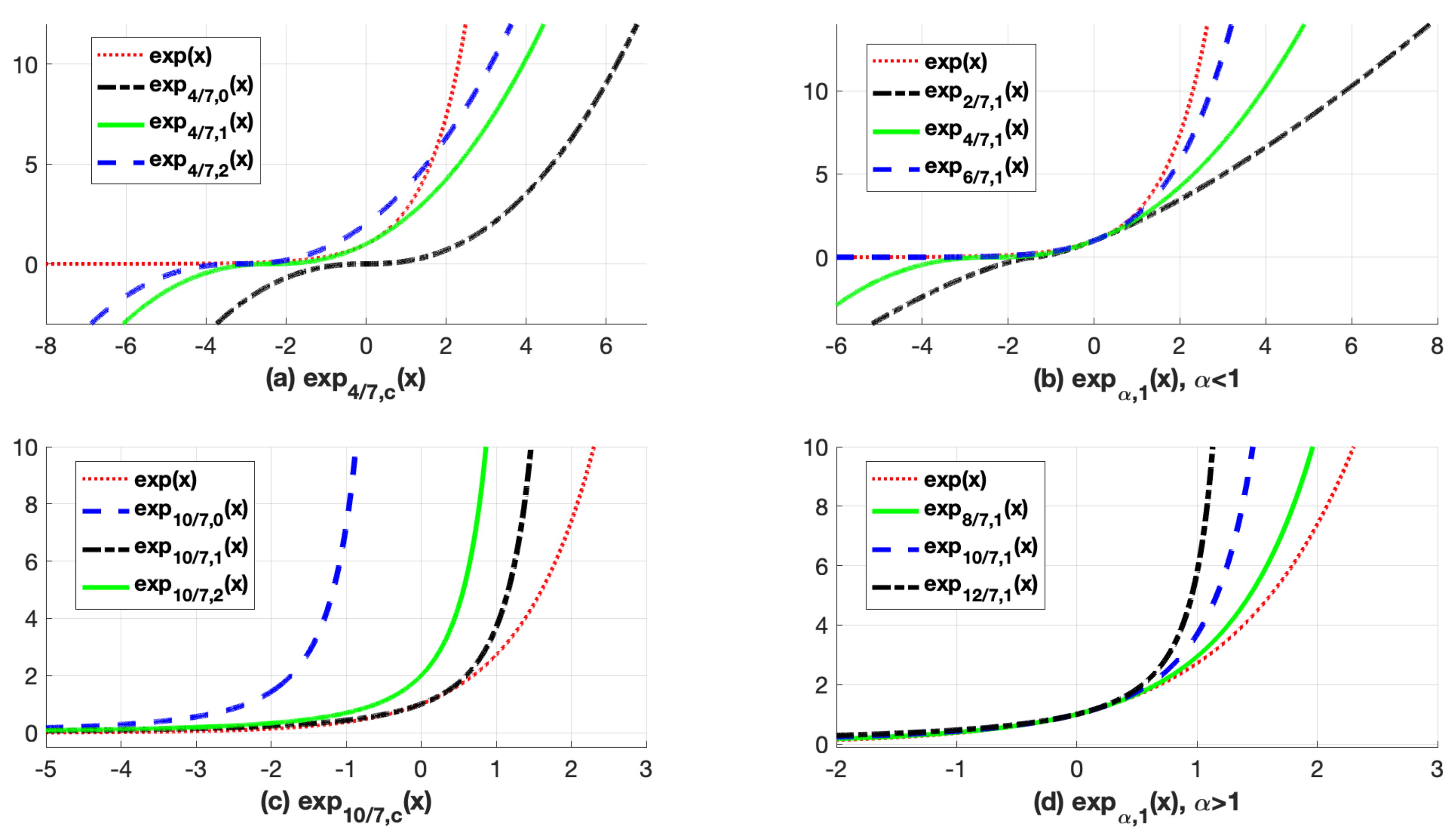} 
\caption{The graphs of the extended exponential functions with various different choices of $\alpha$ and $c$. (a) $\exp_{4/7,c}$ with $c=0,1,2$. (b) $\exp_{\alpha,1}$ with $\alpha = 2/7,4/7,6/7$. (c) $\exp_{10/7,c}$ with $c=0,1,2$. (d) $\exp_{\alpha,1}$ with $\alpha=8/7,10/7,12/7$. When $\alpha \in \{ (0,1) \cap \R_e \} \cup \{ 0, 1\}$, we have $\dom(\exp_{\alpha,c}) = \R$.}
\label{fig:img1}
\end{figure*}

Let us start with the definition of the extended exponential function~\citep{woo19}. Inspired from \citep{lafferty99}, we reformulate it with the generalized exponential function $\overline{\exp}_{\alpha}(x)$:
\begin{equation}\label{expraw}
\boxed{
\exp_{\alpha,c}(x) = \overline{\exp_{\alpha}}(x + \overline{\ln_{\alpha}}(c)) = (c^{1-\alpha} + (1-\alpha)x)^{\frac{1}{1-\alpha}}
}
\end{equation}
where $\exp_{1,c}(x) = c\exp(x)$ and $\dom(\exp_{\alpha,c}) = \{ x \in \R \;|\; \exp_{\alpha,c}(x) \in \R \}$. As observed in \citep{bar-lev86,woo17}, it is more convenient to use an equivalence class for the extended exponential function~\eqref{expraw}. 
\begin{definition}\label{def:exp}
Let $\alpha \in \R$ and $x \in \dom(\exp_{\alpha})$. Then the extended exponential function in \eqref{expraw} is simplified as
\begin{equation}\label{exexp}
\exp_{\alpha}(x) \define
\left\{\begin{array}{l} 
\exp([x]), \qquad\qquad\quad\;\; \hbox{ if } \alpha=1\\
((1-\alpha)[x])^{1/(1-\alpha)} , \quad \hbox{ otherwise }
\end{array}\right.
\end{equation}
where  $\dom(\exp_{\alpha}) = \{ x \in \R \;|\; \exp_{\alpha}(x) \in \R \}$ is in Table \ref{tableE} and $c_{\alpha} = \frac{c^{1-\alpha}}{\alpha-1} \in \R$. If $\alpha \not= 1$, $[x] = x - c_{\alpha}$ and if $\alpha = 1$, $[x] = x + \ln(c)$. Note that, when $\alpha>1$, $sign(c) = sign(\exp_{\alpha}(x))$. 
\end{definition}
For simplicity, in the following of the article, we use $x$, instead of the equivalence class $[x]$, unless otherwise stated. Though we use an equivalence class~\eqref{exexp} for the extended exponential function~\eqref{expraw}, the role of $c$ (i.e., $c_{\alpha}$) is so important in machine learning. In \citep{woo19}, we show that the higher-order hinge loss function, which frequently used as loss functions in machine learning, is a special case of the Perceptron-augmented extended exponential function, i.e.,
$
\hbox{Exp}_{\alpha,c}(x) = \left(\max\left(0,c^{1-\alpha}-(1-\alpha)x\right)\right)^{\frac{1}{1-\alpha}} 
$
where $c>0$ and $0 \le \alpha<1$. For instance, we get the famous hinge-loss function $\hbox{Exp}_{0,1}(x) = \max(0,1-x)$ and the squared hinge-loss function (or L2SVM) $\hbox{Exp}_{1/2,1/4}(x) = 4^{-1}(\max(0,1-x))^2$~\citep{fan08}. Note that third order hinge-loss function is used as an activation function of the deep neural network~\citep{janocha17}. However, if the classic generalized exponential function~\citep{ding10} (i.e. $c=1$) is used then the margin depends only on $\alpha$ and thus we could not control it. The details explanations are given in Section \ref{sec3}.

Note that additional conditions on  $dom(\exp_{\alpha})$ are required for high-level mathematical structures based on the extended exponential function $\exp_{\alpha}$. For instance, the convex function of Legendre type and the inverse relation with the extended logarithmic function.  Now, let us consider the extended logarithmic function~\citep{woo17}, which is formulated with the classic generalized logarithmic function~\citep{tsallis09,amari16}:
\begin{equation}\label{lograw}
\boxed{
\ln_{\alpha,c}(x) = \overline{\ln_{\alpha}}(x) - \overline{\ln_{\alpha}}(c) = c_{\alpha} - x_{\alpha}
}
\end{equation}
where $\ln_{1,c}(x) = \ln(x) - \ln(c)$.
This function is also reformulated with the equivalence class. That is, $\ln_{\alpha}(x) =  [\ln_{\alpha,c}(x)] = \ln_{\alpha,c}(x) - c_{\alpha}$. The details are following.
\begin{definition}\label{def:log}
Let $\alpha \in \R$ and $x \in \dom(\ln_{\alpha})$ then
\begin{equation}\label{exlog}
\ln_{\alpha}(x) \define
\left\{\begin{array}{l} 
\ln(x), \qquad\quad\;\; \hbox{ if } \alpha=1\\
\frac{1}{1-\alpha}x^{1-\alpha} , \qquad \hbox{ otherwise }
\end{array}\right.
\end{equation}
where $\dom(\ln_{\alpha})$ is in Table \ref{tableL}.
\end{definition}
As observed in Table \ref{tableE} and Table \ref{tableL}, $\exp_{\alpha}$ and $\ln_{\alpha}$ do not have the inverse relation. The partial inverse relation between them is summarized in Lemma \ref{lemma4} (in Appendix). The following Lemma presents the reduced domains of them for the inverse relation between $\exp_{\alpha}$ and $\ln_{\alpha}$.

\begin{table*}[t]
\centerline{
{\scriptsize
\begin{tabular}{c|c|ccc|ccc}
\hline\hline
   & $\alpha = 1$ &  & $\alpha < 1$ & & & $\alpha > 1$ & \\ \cline{3-8} 
                 &   & $1-\alpha \in \R_{xe}$ &  $1-\alpha \in \R_o$ &  $1-\alpha  \in \R_{xx} \cup \R_e$   & $1-\alpha \in \R_{xe}$ &  $1-\alpha \in \R_o$ &  $1-\alpha \in \R_{xx}\cup \R_e$ \\ \hline 
 $\dom (\exp_{\alpha})$ & $\R$ & $\R$ & $\R$ & $\R_+$ & $\R_{++}$ / $\R_{--}$ & $\R_{++}$ / $\R_{--}$ & $\R_{--}$ \\ \hline 
 $\hbox{ran}(\exp_{\alpha})$   & $\R_{++}$  & $\R_+$ & $\R$ & $\R_+$ & $\R_{++}$ & $\R_{--}$ / $\R_{++}$ & $\R_{++}$ \\ \hline\hline
\end{tabular}
}}
\caption{The domain and the range of the extended exponential function $\exp_{\alpha}$ in \eqref{exexp}.}\label{tableE}
\end{table*}

\begin{table*}[t]
\centerline{
{\scriptsize
\begin{tabular}{c|c|ccc|ccc}
\hline\hline
   & $\alpha = 1$ &  & $\alpha < 1$ & & & $\alpha > 1$ & \\ \cline{3-8} 
                 &   & $1-\alpha \in \R_e$ &  $1-\alpha  \in \R_o$ &  $1-\alpha \in \R_x$   & $1-\alpha \in \R_e$ &  $1-\alpha \in \R_o$ &  $1-\alpha \in \R_x$ \\ \hline 
 $\dom (\ln_{\alpha})$ & $\R_{++}$ & $\R$ & $\R$ & $\R_+$ & $\R_{++}$ / $\R_{--}$ & $\R_{++}$ / $\R_{--}$ & $\R_{++}$ \\ \hline 
 $\hbox{ran}(\ln_{\alpha})$   & $\R$  & $\R_+$ & $\R$ & $\R_+$ & $\R_{--}$ & $\R_{--}$ / $\R_{++}$ & $\R_{--} $ \\ \hline\hline
\end{tabular}
}}
\caption{The domain and the range of the extended logarithmic function $\ln_{\alpha}$ in \eqref{exlog}.}\label{tableL}
\end{table*}

\begin{table*}[h]
\centerline{\scriptsize
\begin{tabular}{c|c|ccc|ccc}
\hline\hline
   & $\alpha = 1$ &  & $\alpha < 1$ & & & $\alpha > 1$ & \\ \cline{3-8} 
                 &   &  &  $\alpha \in \R_e$ &  $\alpha \in \R \setminus \R_e$   & &  $\alpha \in \R_e$ &  $\alpha \in \R \setminus \R_e$ \\ \hline 
 $\dom (\exp_{\alpha})$   & $\R$  & & $\R$ & $\R_+$ &  & $\R_{--}$ / $\R_{++}$ & $\R_{--} $ \\ \hline
 $\dom (\ln_{\alpha})$ & $\R_{++}$ &  & $\R$ & $\R_+$ & & $\R_{++}$ /  $\R_{--}$ & $\R_{++}$ \\ \hline\hline 
\end{tabular}}
\caption{The reduced domains of the extended exponential function $\exp_{\alpha}$ and the extended logarithmic function $\ln_{\alpha}$ for the bijection  $\exp_{\alpha}= \ln_{\alpha}^{-1} : \dom (\exp_{\alpha}) \rightarrow \dom (\ln_{\alpha})$.
}\label{table3}
\end{table*}

\begin{lemma}\label{lemmaD}
With the reduced domains of $\exp_{\alpha}$ and $\ln_{\alpha}$ in Table \ref{table3}, we have a bijective map: 
\begin{equation}\label{xexpwithD}
\exp_{\alpha}: \dom (\exp_{\alpha}) \rightarrow \dom (\ln_{\alpha})
\end{equation}
and $\ln_{\alpha} = \exp_{\alpha}^{-1}$.
\end{lemma}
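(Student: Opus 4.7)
The plan is to verify the claim by reducing the composition $\ln_\alpha \circ \exp_\alpha$ to the algebraic identity $(y^a)^b = y^{ab}$ with $y = (1-\alpha)x$, $a = 1/(1-\alpha)$, $b = 1-\alpha$, and to use the categorization \eqref{Rclass} of $\R$ to pinpoint when that identity is legitimate in $\R$ — which is precisely what the restriction from Tables \ref{tableE}--\ref{tableL} to Table \ref{table3} enforces. For each row of Table \ref{table3} I plan to check three things: (a) surjectivity, i.e.\ $\exp_\alpha$ carries the reduced $\dom(\exp_\alpha)$ onto the reduced $\dom(\ln_\alpha)$; (b) injectivity via strict monotonicity; and (c) the composition identity
\[
\ln_\alpha\bigl(\exp_\alpha(x)\bigr) = \frac{1}{1-\alpha}\Bigl(\bigl((1-\alpha)x\bigr)^{1/(1-\alpha)}\Bigr)^{1-\alpha} = x,
\]
together with the symmetric identity $\exp_\alpha\circ\ln_\alpha = \mathrm{id}$ on $\dom(\ln_\alpha)$.

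The base case $\alpha = 1$ is the classical bijection $\exp : \R \to \R_{++}$ with $\ln$ as inverse, read off directly from Definitions \ref{def:exp}--\ref{def:log}. For $\alpha \neq 1$ the case split is by whether $\alpha$ (equivalently $1-\alpha$) lies in $\R_e$, $\R_o$, or $\R_x$. The key observation is that for $\alpha < 1$ with $\alpha \in \R_e$, writing $\alpha = 2k/(2l+1)$ gives $1-\alpha = (2l+1-2k)/(2l+1) \in \R_o$, so both $y^{1/(1-\alpha)}$ and $y^{1-\alpha}$ are single-valued on all of $\R$ and the exponent law holds unconditionally; hence $\dom(\exp_\alpha) = \dom(\ln_\alpha) = \R$ and both compositions collapse to $x$. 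For $\alpha < 1$ with $\alpha \in \R\setminus\R_e$ one restricts to $\R_+$, where the exponent law is unambiguous on nonnegative reals. For $\alpha > 1$ the reduced domains are half-lines, and their signs must be paired using $\mathrm{sign}(c) = \mathrm{sign}(\exp_\alpha(x))$ from Definition \ref{def:exp} together with the inverse relations $\R_o^{-1} = \R_o$, $(\R_e\setminus\{0\})^{-1} = \R_{xe}$, $\R_{xx}^{-1} = \R_{xx}$ recorded in the introduction.

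Injectivity on each reduced domain is automatic: when $\alpha \in \R_e$ the map $x \mapsto ((1-\alpha)x)^{1/(1-\alpha)}$ has odd-over-odd exponent and is strictly monotone on $\R$; in every other case the reduced domain is a half-line on which the relevant power function is strictly monotone. Surjectivity onto $\dom(\ln_\alpha)$ can be read off from the range rows of Tables \ref{tableE}--\ref{tableL} by verifying, case by case, that the listed $\mathrm{ran}(\exp_\alpha)$ equals the reduced $\dom(\ln_\alpha)$ of Table \ref{table3}. Bijectivity together with the composition identity in (c) then forces $\ln_\alpha = \exp_\alpha^{-1}$.

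The main obstacle is purely combinatorial bookkeeping — running through the six nontrivial cases of Table \ref{table3} and in each one invoking the appropriate parity/sign property of \eqref{Rclass} to certify that $(y^{1/(1-\alpha)})^{1-\alpha} = y$ is a genuine real identity, not merely a formal one, on the chosen half-line. Appendix Lemma \ref{lemma4}, which already records the partial inverse relation between $\exp_\alpha$ and $\ln_\alpha$ on the unrestricted domains, should carry most of the weight so that the proof reduces to matching the half-line signs produced by $\exp_\alpha$ with those demanded by $\dom(\ln_\alpha)$.
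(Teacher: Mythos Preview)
Your proposal is correct and follows essentially the same route as the paper: handle $\alpha=1$ separately, split the remaining cases by whether $\alpha\in\R_e$ (equivalently $1-\alpha\in\R_o$) or $\alpha\in\R\setminus\R_e$, use monotonicity of the power map on the reduced half-lines for injectivity, and invoke Lemma~\ref{lemma4} together with the range rows of Tables~\ref{tableE}--\ref{tableL} to match $\mathrm{ran}(\exp_\alpha)$ with the reduced $\dom(\ln_\alpha)$. Your treatment is somewhat more explicit about the exponent-law identity $(y^{1/(1-\alpha)})^{1-\alpha}=y$ and the parity bookkeeping via \eqref{Rclass}, whereas the paper compresses this into the single remark that $x^a$ is monotone on the reduced domain; the appeal to $\mathrm{sign}(c)=\mathrm{sign}(\exp_\alpha(x))$ is unnecessary once you work in the equivalence-class formulation, but it does no harm.
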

\begin{proof}
When $\alpha=1$, we have ordinary log and exp functions. Now, we assume that $\alpha \not= 1$. Let $\alpha \in \R_e$ ($1-\alpha \in \R_o$), then $\exp_{\alpha}^{-1}(x) =  \ln_{\alpha}(x) = -x_{\alpha}$ is a monotonic function. In addition, when $\alpha<1$, it is bijective between $\dom(\exp_{\alpha})$ and $\dom(\ln_{\alpha})$. Since the domain of a function is convex, when $\alpha>1$, we have two possible choices of domain $\R_{++}/\R_{--}$ and the corresponding bijective map on the domain.
	
Now, let us consider $\alpha \in \R \setminus \R_{e}$. Based on the analysis in Lemma \ref{lemma4} (Appendix), we reduce the domain of $\ln_{\alpha}$ in Table \ref{tableL} to $\dom(\ln_{\alpha}) \cap \R_{+}$. 
That is, we set $\dom(\ln_{\alpha}) \define \dom(\ln_{\alpha}) \cap \R_+$. In case of the extended exponential function, it is a little bit complicated. When $\alpha<1$, we set $\dom(\exp_{\alpha}) \define \dom(\exp_{\alpha}) \cap \R_{+}$ and, when $\alpha>1$, we set $\dom(\exp_{\alpha}) \define \dom(\exp_{\alpha}) \cap \R_{--}$.  
Then we have 
$\hbox{ran}(\ln_{\alpha}) = \dom(\exp_{\alpha}) = \left\{\begin{array}{l} \R_+  \;\;\quad\hbox{ if } \alpha<1 \\ \R_{--} \quad\hbox{ if } \alpha>1 \end{array}\right.$ 
and 
$\hbox{ran}(\exp_{\alpha}) = \dom(\ln_{\alpha}) = \left\{\begin{array}{l} \R_+  \;\;\quad\hbox{ if } \alpha<1 \\ \R_{++} \quad\hbox{ if } \alpha>1 \end{array}\right.$ 
Due to the monotonicity of $x^{a}$ for all $a \in \R \setminus \{ 0 \}$ on the reduced domain in Table \ref{table3}, one-to-one condition is satisfied. 
\end{proof}
By virtue of the reduced domains in Table \ref{table3}, it is easy to build up sophisticated mathematical objects, like convex function of Legendre type. In Lemma \ref{psiTh} (Appendix), we briefly describe the domain of $\Psi(x) = \int_d^x \exp_{\alpha}(\xi)d\xi$ only with the domain condition~\citep{roc70}. That is, for a convex function $f$, we have 
$int(\dom  f) \subseteq \dom \partial f \subseteq \dom  f$, where $\partial f$ is a subgradient of $f$. Based on this, we can characterize the domain of $\Psi(x)$ satisfying the additional conditions of the convex function of Legendre type. Note that $\Psi(x)$ is a useful function while analyzing the structure of the Bregman-Tweedie classification model presented in Section \ref{sec3} and the moment-limited Tweedie exponential dispersion model~\citep{woo18}. 

Let us start with the definition of the convex function of Legendre type~\citep{bauschke97,roc70}.
\begin{definition}\label{legendredef}
Let $f: \dom f \rightarrow \R$ be lower semicontinuous, convex, proper function on $\dom f \subseteq \R$. Then $f$ is a convex function of Legendre type, if the following conditions are satisfied.
\begin{itemize}
\item $int(\dom f)\not= \emptyset$ and $f$ is strictly convex and differentiable on $int(\dom f)$
\item ({\it steepness}) $\forall x \in bd(\dom f)$ and $\forall y \in int(\dom f),$
\begin{equation}\label{steep}
\lim_{t \downarrow 0} \inprod{f'(x + t(y-x))}{y-x} = -\infty
\end{equation}
\end{itemize}
\end{definition}
Here, \eqref{steep} is known as the steepness condition in statistics~\citep{brown86,barndorff14}.  Now, we present $\Psi$, the indefinite integral of the extended exponential function, satisfying the conditions of the convex function of Legendre type in Definition \ref{legendredef}.
\begin{theorem}\label{corPsi}
Let  $x \in \dom\Psi$ and $\exp_{\alpha}$ be an extended exponential function in \eqref{xexpwithD}. Then,
\begin{equation}\label{basefnD}
\Psi(x) = \int_d^{x} \exp_{\alpha}(\xi)d\xi = 
\left\{\begin{array}{l} 
\exp(x) \qquad\qquad\quad\;\; \hbox{ if } \alpha = 1 \\
-\ln(-x) \qquad\quad\quad\;\; \hbox{ if } \alpha = 2 \\
\frac{1}{2-\alpha}[(1-\alpha)x]^{\frac{2-\alpha}{1-\alpha}} \quad\;\; \hbox{otherwise }
\end{array}\right.
\end{equation}
is the convex function of Legendre type on $\dom  \Psi$:
\begin{equation}\label{conditionLegendreD}
\left\{
\begin{array}{l}
\hbox{I. entire region:}\\
\qquad \alpha<1,\; \alpha \in \R_e  \hskip 1cm\hbox{ and } \hskip 0.5cm \dom\Psi = \R,\\
\qquad \alpha=1,\; \hskip 2.2cm\hbox{ and } \hskip 0.5cm \dom\Psi = \R,\\
\hbox{II. positive region:}\\
\qquad 1 < \alpha < 2, \alpha \in \R_e \hskip 0.45cm \hbox{ and }  \hskip 0.5cm  \dom\Psi = \R_{++},\\
\qquad 2 < \alpha, \qquad \alpha \in \R_e \hskip 0.4cm \hbox{ and }  \hskip 0.5cm  \dom\Psi = \R_{+},\\
\hbox{III. negative region:}\\
\qquad 1 \le \alpha < 2,  \hskip 1.65cm\hbox{ and }  \hskip 0.5cm \dom\Psi = \R_{--},\\
\qquad 2<\alpha,  \hskip 2.4cm \hbox{ and }  \hskip 0.55cm \dom\Psi = \R_{-}. \\
\end{array}
\right.
\end{equation}
Here, we drop all constant terms.
\end{theorem}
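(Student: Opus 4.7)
The plan is to proceed in three stages: first evaluate $\Psi(x)$ in closed form, then verify strict convexity and differentiability on $int(\dom\Psi)$, and finally check the steepness condition at the boundary in each of the listed subcases.

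First, I would compute $\int_d^x \exp_\alpha(\xi)\,d\xi$ by direct antidifferentiation, splitting on whether $\alpha=1$, $\alpha=2$, or neither. The case $\alpha=1$ gives $e^x$, the case $\alpha=2$ gives $-\ln(-x)$ since $\exp_2(\xi)=(-\xi)^{-1}$ on $\R_{--}$, and the remaining case is a power-rule calculation giving $\frac{1}{2-\alpha}((1-\alpha)x)^{(2-\alpha)/(1-\alpha)}$ after absorbing the constant of integration. To pin down which of the seven domain configurations in \eqref{conditionLegendreD} actually yields a real-valued, everywhere-defined $\Psi$, I would read off the reduced domains of $\exp_\alpha$ from Table \ref{table3} and then intersect with the algebraic condition that $((1-\alpha)x)^{(2-\alpha)/(1-\alpha)}$ is real. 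The distinction between $\R_{++}$ and $\R_+$ in case II (and $\R_{--}$ versus $\R_-$ in case III) is decided by whether the exponent $(2-\alpha)/(1-\alpha)$ is positive, which is exactly the split $1<\alpha<2$ versus $\alpha>2$; a positive exponent permits continuous extension of $\Psi$ to $0$.

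Second, strict convexity and $C^1$ regularity on the interior are immediate from Lemma \ref{lemmaD}. Since $\Psi$ is the integral of the continuous, strictly monotonically increasing map $\exp_\alpha$ on the reduced domain, $\Psi$ is differentiable with $\Psi'(x)=\exp_\alpha(x)$, and strict monotonicity of the derivative yields strict convexity on $int(\dom\Psi)$. Lower semicontinuity, properness, and nonemptiness of the interior are clear on each listed domain.

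Third, the steepness condition \eqref{steep} must be checked at the boundary point $\{0\}$ for cases II and III (case I is vacuous since $\dom\Psi=\R$ has no boundary). For $x=0$ and $y\in int(\dom\Psi)$, the inner product in \eqref{steep} reduces to $y\cdot\exp_\alpha(ty)$ as $t\downarrow 0$, so the analysis splits by the sign of $y$: for $\dom\Psi\subseteq\R_{++}$, $y>0$, and I would show $\exp_\alpha(ty)\to -\infty$ by noting that $(1-\alpha)ty<0$ while the exponent $1/(1-\alpha)$, with odd-over-odd denominator guaranteed by $\alpha\in\R_e$, preserves the negative sign of the base and produces a negative blow-up. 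For $\dom\Psi\subseteq\R_{--}$, $y<0$, we instead have $(1-\alpha)ty>0$ going to $0^+$ with negative exponent, so $\exp_\alpha(ty)\to+\infty$, and the negative factor $y$ flips the product to $-\infty$. The case $\alpha=2$ is handled directly from $\Psi'(x)=1/x$, which tends to $-\infty$ as $x\uparrow 0$.

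The main obstacle is the bookkeeping in the steepness step: one must simultaneously track the sign of $(1-\alpha)$, the sign of $x$ near the boundary, and whether the rational exponent $1/(1-\alpha)$ keeps the base negative or sends it to a complex value. The categorization $\R_e,\R_o,\R_x$ from \eqref{Rclass} together with the reduced-domain prescription of Table \ref{table3} are exactly the tools that make this analysis uniform across all seven subcases, so once Lemma \ref{lemmaD} and Lemma \ref{psiTh} are in hand, the remaining work is essentially a sign chart.
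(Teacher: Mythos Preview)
Your proposal is correct and reaches the same conclusion, but the argument is organised differently from the paper's proof in two places worth noting.

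For strict convexity, you appeal directly to the strict monotonicity of $\exp_\alpha=\Psi'$ on the reduced domain (supplied by Lemma~\ref{lemmaD} and the remark in the proof of Lemma~\ref{psiTh}); this gives strict convexity on $int(\dom\Psi)$ in one line. The paper instead computes the second derivative $\Psi''(x)=((1-\alpha)x)^{\alpha/(1-\alpha)}$ and runs a case analysis on the sign of $\Psi''$ according to whether $\alpha\in\R_e$ and whether $\alpha<1$, $1<\alpha<2$, or $\alpha>2$. Because $\Psi''(0)=0$ in the case $\alpha<1$, $\alpha\in\R_e$, the paper must still fall back on a first-derivative monotonicity argument at $x=0$, so your route is in fact the more economical one.

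For steepness, the emphasis is reversed. The paper disposes of it in one sentence: among all the domains produced by Lemma~\ref{psiTh}, the only obstruction is $\exp_\alpha(0)=0$ being finite, which occurs precisely when $\alpha<1$ and $\alpha\in\R\setminus\R_e$; that case is therefore excluded from \eqref{conditionLegendreD}, and steepness holds in every remaining case. You instead verify \eqref{steep} explicitly at the boundary point $0$ via the sign chart on $(1-\alpha)ty$ and the parity of $1/(1-\alpha)$. Both arguments are sound; the paper's version is terser but leaves the positive verification implicit, while yours makes the blow-up at the boundary fully visible.
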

\begin{proof}
It is trivial to show that \eqref{basefnD} is the convex function of Legendre type when $\alpha \in \{ 1, 2 \}$. Hence, we assume that $\alpha \in \R \setminus \{1,2\}.$ It is easy to check differentiability of $\Psi$ on $int(\dom\Psi)$ and thus we only need to check steepness condition and strict convexity of $\Psi$ on $int(\dom \Psi)$. As observed in Table \ref{table3}, we have $\partial\Psi(0) = \exp_{\alpha}(0) = 0$, only when $\alpha<1$ and $\alpha \in \R \setminus \R_e.$ Thus, steepness condition is not satisfied in this case. 

Now, we will check strict convexity of $\Psi$ on $int(\dom \Psi)$. The second derivative of $\Psi$ is given as
\begin{equation}\label{hessianexp}
\Psi''(x) = ((1-\alpha)x)^{\frac{\alpha}{1-\alpha}}
\end{equation}
where $\alpha \not\in \{1,2\}$.
\begin{itemize}
\item $\alpha<1$: 
\begin{itemize}
\item $\alpha \in \R_e$: $\frac{\alpha}{1-\alpha} \in \R_e$. Thus  we get $\Psi''(x) >  0$ for all $x \in \R \setminus \{  0 \}$. We only need to check strict convexity at zero. In fact,
$$
\Psi'(x) = \exp_{\alpha}(x) = [(1-\alpha)x]^{1/(1-\alpha)} =  c_1 x^{\frac{1}{1-\alpha}},
$$
where $c_1 = (1-\alpha)^{\frac{1}{1-\alpha}}>0$ and $\Psi'(-x) = -\Psi'(x).$ Thus, $\Psi'(x)$ is monotonically increasing at zero. Hence, $\Psi$ is strictly convex on its entire domain $\R.$
\item $\alpha \in \R \setminus \R_e$: $\frac{\alpha}{1-\alpha} \in \R \setminus \R_e.$ Hence, we have $\Psi''(x)>0,$ for all $x \in int(\dom \Psi) = \R_{++}.$ Therefore, it satisfies strict convexity on $int(\dom \Psi)$. 
\end{itemize}
\item $1<\alpha<2$: 
The domain of $\Psi$ is open, i.e., $\dom \Psi = \R_{--}$ (or $\R_{++}$). Therefore, we only need to check $\Psi''(x)> 0$ on its domain.
\begin{itemize}
\item $\alpha \in \R_e$: $\frac{\alpha}{1-\alpha} \in \R_e$. Thus, we get $\Psi''(x) > 0$ when $x \in \R_{++}$ (or $\R_{--}$). Therefore, $\Psi$ is strictly convex on $int(\dom \Psi)$ where $\dom \Psi = \R_{++}$ (or $\R_{--}$).
\item $\alpha \in \R \setminus \R_e$: $\frac{\alpha}{1-\alpha} \in \R \setminus \R_e$. When $x \in \R_{--}$, we get $\Psi''(x)> 0$. Therefore, $\Psi$ is strictly convex on $int(\dom \Psi)$ where $\dom \Psi = \R_{--}$. 
\end{itemize}
\item $\alpha>2:$
\begin{itemize}
\item $\alpha \in \R_e$: $\frac{\alpha}{1-\alpha} \in \R_e$. When $x \in \R_{++}$ (or $\R_{--}$), we get $\Psi''(x) > 0.$  Therefore, $\Psi$ is strictly convex on its interior of domain, i.e., $int(\dom \Psi) = \R_{++}$ (or $\R_{--}$).  
\item $\alpha \in \R \setminus \R_{e}$: $\frac{\alpha}{1-\alpha} \in \R \setminus \R_e$. When $x \in \R_{--}$, we get $\Psi''(x) > 0$ and therefore $\Psi$ is strictly convex on $int(\dom \Psi)$.
\end{itemize}
\end{itemize}
We conclude that $\Psi$ is the convex function of Legendre type on its domain defined in Lemma \ref{psiTh} (Appendix) except $\alpha < 1$ and $\alpha \in \R\setminus\R_e.$ 
\end{proof}

\begin{figure*}[t]
\centering
\includegraphics[width=5.5in]{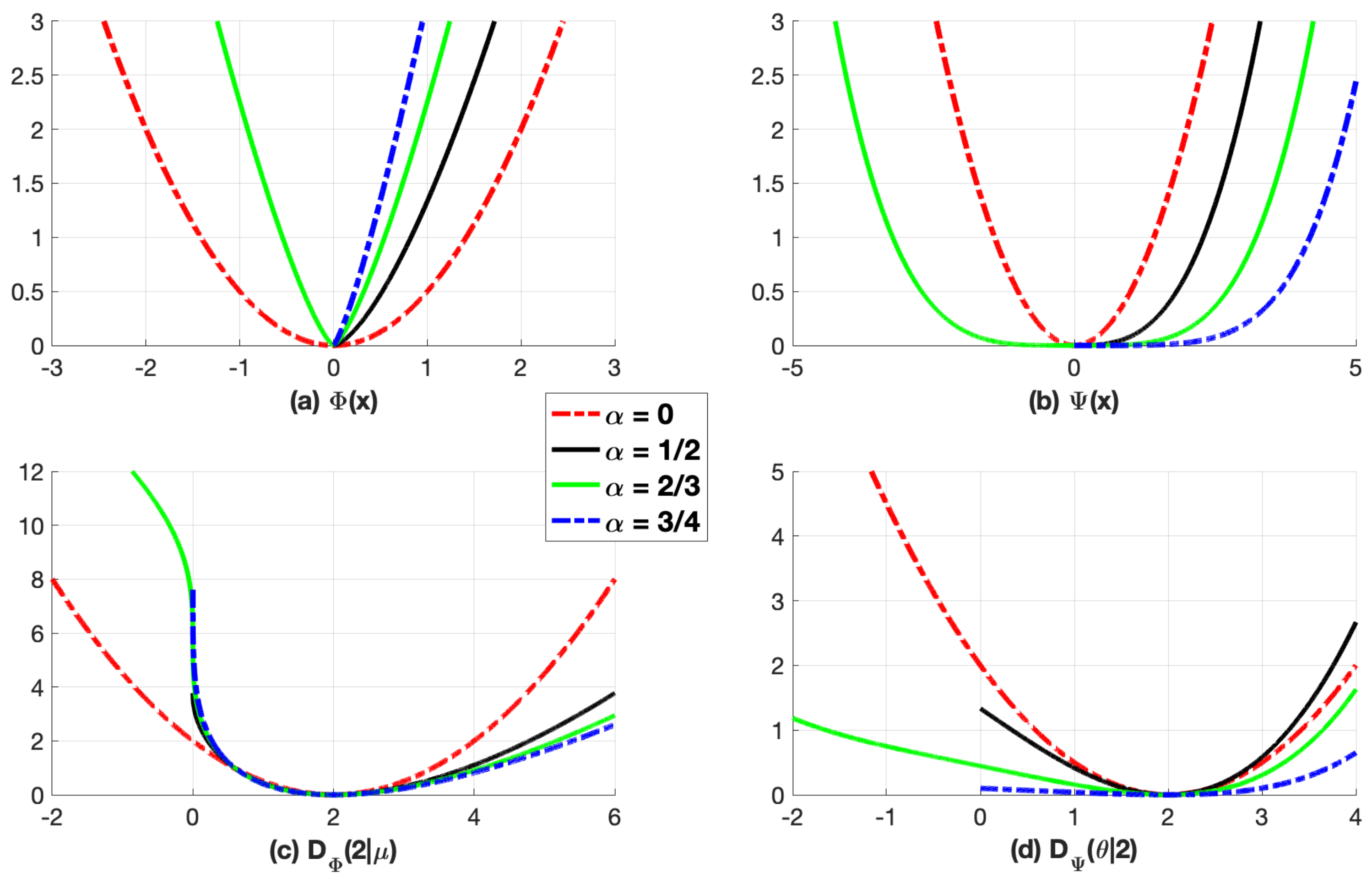} 
\caption{Graphs of the convex function of Legendre type; (a) $\Phi(x) = \frac{1}{(2-\alpha)(1-\alpha)}x^{2-\alpha}$ and (b) $\Psi(x) = \frac{1}{2-\alpha} [(1-\alpha)x]^{\frac{2-\alpha}{1-\alpha}}$, and the Bregman divergence associated with them; (c) Bregman-beta divergence $D_{\Phi}(2|\mu)$ and (d) Bregman-Tweedie divergence $D_{\Psi}(\theta|2)$. Here $\alpha = 0, 1/2, 2/3, 3/4$.}
\label{fig:img2}
\end{figure*}

Interestingly, as observed in \citep{woo19}, $\Psi$ is a cumulant function of the Tweedie exponential dispersion model and satisfy the conditions of the convex function of Legendre type in Definition \ref{legendredef} at the same time.
\begin{equation}\label{pX}
p_{\Psi}(b;\theta,\sigma^2) = \exp\left(\frac{\inprod{b}{\theta}-\Psi(\theta)}{\sigma^2}\right)p_0(b,\sigma^2)
\end{equation}
where $b \in {\cal B}$ is a random variable, $\sigma^2>0$ is a dispersion parameter, and $p_0(b,\sigma^2)$ is a base measure satisfying $\int_{{\cal B}} p_{\Psi}(b;\theta,\sigma^2)\nu(db) = 1.$ Note that \eqref{pX} can be reformulated with the Bregman-divergence associated with $\Psi$. See also \citep{banerjee05} for the equivalence between regular exponential families and the regular Bregman-divergence. Now, we define the {\it Bregman-Tweedie divergence} (i.e., the Bregman-divergence associated with $\Psi$) as  
\begin{equation}\label{tweedieBregman}
D_{\Psi}(x|y) = \Psi(x) - \Psi(y) - \inprod{\Psi'(y)}{x-y}
\end{equation}
where $(x,y) \in \dom\Psi \times int(\dom\Psi)$. Here, $\dom\Psi$ is in \eqref{conditionLegendreD}. Though the Bregman-Tweedie divergence~\eqref{tweedieBregman} is useful in characterizing Tweedie distribution~\citep{woo19}, it is also helpful in understanding the Bregman-Tweedie classification model in Section \ref{sec3}. However, $\Phi(x)= \int_d^x \ln_{\alpha}(t)dt$ was studied in \citep{woo17} for the characterization of the $\beta$-divergence within the Bregman divergence framework. Hence, we have the {\it Bregman-beta divergence} (i.e., Bregman-divergence associated with $\Phi$) defined as
\begin{equation}\label{betaBregman}
D_{\Phi}(x|y) = \Phi(x) - \Phi(y) - \inprod{\Phi'(y)}{x-y}
\end{equation}
where $\Phi$ is in the following theorem.
\begin{theorem}\label{legendre_beta}
Let $x \in dom\Phi$ and 
\begin{eqnarray}\label{basefn}
\Phi(x) &=& \int_d^{x} \ln_{\alpha}(t) dt
= \left\{\begin{array}{l} 
 -\log x, \qquad\quad\quad \;\;\hbox{ if } \alpha=2,\\ 
x\log x - x, \qquad\quad \hbox{ if } \alpha =1,\\
\frac{1}{(2-\alpha)(1-\alpha)}x^{2-\alpha}, \;\;\quad \hbox{ if } \alpha \not= 1,2,
\end{array}\right.
\end{eqnarray}
where  $\ln_{\alpha}(t)$ is the extended logarithmic function in Definition \ref{def:log}. Then, $\Phi$ in \eqref{basefn} is the convex function of Legendre type on $\dom\Phi$ given below: 
\begin{equation}\label{conditionLegendre}
\left\{
\begin{array}{l}
\hbox{I. entire region:}\\
\qquad \alpha<1,\; \alpha \in \R_e  \hskip 1.22cm\hbox{ and } \hskip 0.5cm \dom\Phi = \R,\\
\hbox{II. positive region:}\\
\qquad  1 \le  \alpha < 2 \hskip 1.9cm \hbox{ and }  \hskip 0.5cm \dom\Phi  = \R_{+},\\
\qquad 2 \le \alpha  \hskip 2.6cm \hbox{ and }  \hskip 0.5cm  \dom\Phi = \R_{++},\\
\hbox{III. negative region:}\\
\qquad 1< \alpha < 2, \; \alpha \in \R_e  \hskip 0.53cm\hbox{ and }  \hskip 0.2cm \dom\Phi = \R_{-},\\
\qquad 2 < \alpha , \;\qquad \alpha \in \R_e \hskip 0.43cm \hbox{ and }  \hskip 0.3cm \dom\Phi = \R_{--}. \\
\end{array}
\right.
\end{equation}
For simplicity, we drop all constants in $\Phi(x)$.
\end{theorem}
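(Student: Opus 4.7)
The plan is to mirror the structure of the proof of Theorem \ref{corPsi}, since $\Phi$ plays the analogous role for $\ln_{\alpha}$ that $\Psi$ plays for $\exp_{\alpha}$. First I would dispose of the two classical cases $\alpha=1$ and $\alpha=2$ separately: for $\alpha=1$, $\Phi(x)=x\log x - x$ is the standard negative entropy, which is well-known to be of Legendre type on $\R_+$ with $\Phi'(0^+) = -\infty$; for $\alpha=2$, $\Phi(x)=-\log x$ is the Burg entropy, Legendre type on $\R_{++}$. For all remaining $\alpha \in \R\setminus\{1,2\}$, observe that $\Phi'(x)=\ln_{\alpha}(x) = \tfrac{1}{1-\alpha}x^{1-\alpha}$ and
\begin{equation*}
\Phi''(x) = x^{-\alpha},
\end{equation*}
so differentiability on $int(\dom\Phi)$ is automatic and only strict convexity and steepness need verification.

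For strict convexity, I would perform the same case analysis on the exponent $-\alpha$ (or equivalently on whether $\alpha \in \R_e$ or $\alpha \in \R\setminus\R_e$) that was used in Theorem \ref{corPsi}. The pattern is:
\begin{itemize}
\item If $\alpha \in \R_e$, then $x^{-\alpha}>0$ for every $x\neq 0$, so $\Phi''>0$ on both $\R_{++}$ and $\R_{--}$; this yields both a positive-region and a negative-region Legendre domain.
\item If $\alpha \in \R\setminus\R_e$, then $x^{-\alpha}$ is only well-defined and positive on $\R_{++}$ (positive region only), except in the sub-case $\alpha<1$ with $\alpha\in\R_e$ where strict convexity can still be pushed across zero by the same monotonicity-at-zero argument as used for $\Psi'$ in the previous theorem (because $\Phi'(-x)=-\Phi'(x)$ when $1-\alpha\in\R_o$), giving the entire-region line $\dom\Phi=\R$.
\end{itemize}

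For steepness, the key computation is the boundary behavior of $\Phi'(x)=\frac{1}{1-\alpha}x^{1-\alpha}$. Whenever $\dom\Phi$ has $0$ as a boundary point, I need $\Phi'(x)\to -\infty$ as $x\to 0$ from the interior. This holds precisely when $1-\alpha<0$ (so $x^{1-\alpha}\to\infty$) combined with the sign $\tfrac{1}{1-\alpha}<0$, i.e., when $\alpha>1$, which is exactly the range in which $0\in bd(\dom\Phi)$ in the positive/negative region rows of \eqref{conditionLegendre}. For the borderline $\alpha=1$ row (closed positive region $\R_+$), the explicit $x\log x - x$ case handles it. Symmetrically, on negative-region domains with $\alpha\in\R_e$, $1-\alpha\in\R_o$, the odd-power reflection gives steepness at $0$ from the left. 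Conversely, the excluded case $\alpha<1$ with $\alpha\in\R\setminus\R_e$ has $\dom\Phi = \R_+$ with $\Phi'(0)=\ln_\alpha(0)=0$, failing steepness; this is why that row does not appear in \eqref{conditionLegendre}.

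The main obstacle, as in Theorem \ref{corPsi}, is keeping the book-keeping of the $\R_e$ vs.\ $\R\setminus\R_e$ dichotomy consistent with the admissible domains in Table \ref{tableL}, so that each listed region in \eqref{conditionLegendre} simultaneously (i) makes $x^{2-\alpha}$ (or its subscript-coded sign convention) well-defined, (ii) yields $\Phi''>0$ on the interior, and (iii) satisfies steepness at every boundary point of the closed subcases. Once these three checks are organized into the same tree of cases used for $\Psi$, the conclusion follows verbatim.
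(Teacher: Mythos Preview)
The paper does not actually supply a proof of Theorem \ref{legendre_beta}; it states the result and defers to \citep{woo17} for the details. So there is no in-paper argument to compare against. Your plan of mirroring the proof of Theorem \ref{corPsi} is the natural one and is essentially what the cited reference does, since $\Phi$ and $\Psi$ are Legendre conjugates (Corollary \ref{legendre_betax}) and the case structure is dual.

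Your argument is correct in substance, but the presentation of the strict-convexity bullets is tangled: you place the ``except in the sub-case $\alpha<1$ with $\alpha\in\R_e$'' clause under the $\alpha\in\R\setminus\R_e$ bullet, which is contradictory. The monotonicity-at-zero argument (using $\Phi'(-x)=-\Phi'(x)$ when $1-\alpha\in\R_o$) belongs under the $\alpha\in\R_e$ bullet, where it upgrades the separate $\R_{++}$/$\R_{--}$ regions to the single entire region $\dom\Phi=\R$ when $\alpha<1$. Also, in the steepness paragraph, be explicit that on the negative-region domains the required limit is $\Phi'(x)\to +\infty$ as $x\to 0^-$ (right boundary), not $-\infty$; you gesture at this via ``odd-power reflection'' but the sign flip should be stated. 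With those two clarifications the proof goes through.
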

As observed in \citep{woo17}, due to the invariance properties with respect to the affine function of the base function of the Bregman divergence, the structure of the Bregman-beta divergence $D_{\Phi}$ does not change, irrespective of the choice of the affine function of the base function. Hence, for simplicity, we add $x$ to $\Phi$~\eqref{basefn}, when $\alpha=1$. 

\begin{corollary}\label{legendre_betax}
Let us consider $\Phi$ with $\dom\Phi$ in \eqref{conditionLegendre} and $\Psi$ with $\dom\Psi$ in \eqref{conditionLegendreD}. Then, we have 
\begin{equation}\label{conjugateEq}
\Psi^* = \Phi
\end{equation}
where $\Psi^*(x) = \sup_{\xi} \inprod{x}{\xi} - \Psi(\xi)$ and constants are dropped. 
\end{corollary}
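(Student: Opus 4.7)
The plan is to invoke the standard Legendre duality for convex functions of Legendre type and then match domains case by case. By Theorem \ref{corPsi}, $\Psi$ is a convex function of Legendre type on $\dom\Psi$ as specified in \eqref{conditionLegendreD}, and Rockafellar's theorem (see \citep{roc70}, and Theorem 26.5 in particular) tells us that the conjugate $\Psi^{\ast}$ is itself a convex function of Legendre type and that the gradient map $\Psi'$ is a bijection from $int(\dom\Psi)$ onto $int(\dom\Psi^{\ast})$ with inverse $(\Psi^{\ast})' = (\Psi')^{-1}$. This reduces the corollary to a derivative-identification plus a domain-matching exercise.

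First I would compute $(\Psi^{\ast})'$. Since $\Psi'(\xi) = \exp_{\alpha}(\xi)$ on $int(\dom\Psi)$, Lemma \ref{lemmaD} gives
\[
(\Psi^{\ast})'(x) = (\Psi')^{-1}(x) = \exp_{\alpha}^{-1}(x) = \ln_{\alpha}(x) = \Phi'(x),
\]
valid on $int(\dom\Psi^{\ast})$. Hence $\Psi^{\ast}$ and $\Phi$ have the same derivative on the interior of their (to-be-matched) domains, so they agree up to an additive constant, which we drop by convention.

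Next I would verify that $int(\dom\Psi^{\ast}) = int(\dom\Phi)$, by going through the regions in \eqref{conditionLegendreD} and matching them against \eqref{conditionLegendre} using Table \ref{table3}. Concretely, $int(\dom\Psi^{\ast})$ equals $\exp_{\alpha}(int(\dom\Psi))$, which by Table \ref{table3} is $\R$ when $\alpha=1$ or when $\alpha<1$ with $\alpha\in\R_e$; $\R_{++}$ when $1<\alpha<2$ (in the positive branch) or when $\alpha>2$ with $\alpha\in\R_e$ (positive branch) or $\alpha\in\R\setminus\R_e$; and $\R_{--}$ in the corresponding negative branches permitted by \eqref{conditionLegendreD}. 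In each such case this interior agrees with $int(\dom\Phi)$ as listed in \eqref{conditionLegendre}. Since both $\Psi^{\ast}$ and $\Phi$ are convex and lower semicontinuous, equality of the interiors together with equality of derivatives on them forces equality of the closures of the effective domains, and hence $\Psi^{\ast} = \Phi$ up to constants.

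The only subtle step is the third one: the domain tables branch on whether $\alpha\in\R_e$, $\R_o$, or $\R_x$, and the positive/negative branch choices in \eqref{conditionLegendreD} must be paired consistently with those in \eqref{conditionLegendre} so that $\exp_{\alpha}$ really maps one to the other. I expect this bookkeeping to be the main obstacle, but it is entirely routine once one writes the exponents $\frac{1}{1-\alpha}$ and $1-\alpha$ against the $\R_e/\R_o/\R_x$ classification; no analytic content beyond Lemma \ref{lemmaD} and Theorem \ref{corPsi} is needed.
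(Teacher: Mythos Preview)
Your approach is correct and self-contained, but it is organized differently from the paper's proof. The paper does not verify $\Psi^{\ast}=\Phi$ directly; instead it first invokes the computation $\Phi^{\ast}=\Psi$ carried out in \citep{woo17}[Theorem~7], and then appeals to biconjugation: since $\Phi$ is of Legendre type, $\Phi=\Phi^{\ast\ast}=\Psi^{\ast}$. Your route instead stays entirely inside the present paper, using the gradient-inversion identity $(\Psi^{\ast})'=(\Psi')^{-1}=\ln_{\alpha}=\Phi'$ (which the paper records only as an afterthought) together with the domain matching from Table~\ref{table3}. The trade-off is that the paper's argument is essentially two lines but leans on an external reference, whereas yours is self-contained at the price of the case-by-case bookkeeping you sketched. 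One small slip to fix: in your domain enumeration, the case $\alpha=1$ gives $\exp_{1}(\R)=\R_{++}$, not $\R$; this is exactly $int(\dom\Phi)=\R_{++}$ from \eqref{conditionLegendre}, so the conclusion is unaffected.
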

\begin{proof}
It is easy to check $\Phi^* = \Psi$ when $\alpha \in \{ 1, 2 \}$. Now, let us assume that $\alpha \not\in \{1,2\}$.
From the detail computations of the conjugate function $\Phi^*$ done in \citep{woo17}[Theorem 7], we have
$
\Phi^* = \Psi.
$
Since $\Phi$ is a convex function of Legendre type, we have $\Phi = \Phi^{**} = \Psi^*$. Additionally, we have that $(\Psi')^{-1}(x) = (\Psi^*)'(x) = \ln_{\alpha}(x)$ where $\Psi'(\xi) = \exp_{\alpha}(\xi).$ 
\end{proof}
In Figure \ref{fig:img2}, we plot the Bregman-Tweedie divergence $D_{\Psi}$ and the base function $\Psi$, and the corresponding Bregman-beta divergence $D_{\Phi}$ and the base function $\Phi$ as well. Even though $\Phi$ and $\Psi$ are strict convex functions on their interior of the domains, the Bregman divergences are not always convex in terms of the second variable.  For instance, Figure \ref{fig:img2} (c) shows that $D_{\Phi}(2|\mu)$ with $\alpha=2/3$ is a non-convex function in terms of $\mu$. 

\section{Bregman-Tweedie classification model\label{sec3}}
This Section introduces the Bregman-Tweedie loss function which is the regular Legendre transformation of the Bregman-Tweedie divergence. Instead of the equivalence class in \eqref{exexp}, we use $\exp_{\alpha,c}(x)$, the extended exponential function with a  scaling parameter $c$. For simplicity, we only consider $\alpha \in \{0, 1\} \cup \{(0,1) \cap \R_e\}$. Then, irrespective of the choice of $\alpha$ and $c$, we have $\dom(\exp_{\alpha,c}) = \R$. 

Let us start with the definition of the regular Legendre transformation of the Bregman-divergence associated with the convex function of Legendre type. 
\begin{definition}\label{def:legendreB}
Let $f : \dom f \rightarrow \R$ be a convex function of Legendre type and $\dom f =  int\dom f$. Additionally, let $D_f(z|x) : \dom f \times \dom f \rightarrow \R$ be the Bregman divergence associated with the convex function $f$ of Legendre type and $\eta + \dom f^* \subseteq \dom f^*$. Then, for all $x \in \dom f$, we have the regular Legendre transformation (of the Bregman-divergence associated with the convex function of Legendre type)
\begin{equation}\label{legendreB}
{\cal L}_f(\eta,x) = \underset{z \in dom f}{\arg\sup}\; \inprod{\eta}{z} - D_f(z | x).
\end{equation}
\end{definition}
In general, the convex function $f$ of Legendre type satisfies the following isomorphism~\citep{bauschke97}: 
$
f' : int\dom f \rightarrow int\dom f^*
$
where $(f')^{-1} = (f^*)'$ and $f^*(x) = \sup_{y} \inprod{x}{y} - f(y)$. With this isomorphism, we can show that the regular Legendre transformation of the Bregman divergence~\eqref{legendreB} has an additive structure. This is useful in analyzing the structure of the extended adaboosting~\citep{lafferty99}. \begin{theorem}
Let $\dom f = int\dom f$ and ${\cal L}_f(\eta,x)$ be the regular Legendre transformation of the Bregman divergence~\eqref{legendreB}. Then, for all $x \in \dom f$, we have 
\begin{equation}\label{additiveB}
{\cal L}_{f}(\eta_1, {\cal L}_{f}(\eta_2,x) ) = {\cal L}_f(\eta_1 + \eta_2,x)
\end{equation}
\end{theorem}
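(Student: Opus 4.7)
The plan is to first solve the inner optimization defining $\mathcal{L}_f(\eta,x)$ explicitly, and then exploit the fact that $f'$ and $(f^*)'$ are mutually inverse isomorphisms between $\mathrm{int}\,\mathrm{dom}\,f$ and $\mathrm{int}\,\mathrm{dom}\,f^*$ to collapse the double application into a single one.

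First I would expand the objective in \eqref{legendreB} using the definition of the Bregman divergence,
\begin{equation*}
\langle \eta,z\rangle - D_f(z|x) = \langle \eta + f'(x),\,z\rangle - f(z) + f(x) - \langle f'(x),x\rangle.
\end{equation*}
Because $f$ is of Legendre type and $\mathrm{dom}\,f = \mathrm{int}\,\mathrm{dom}\,f$ is open, the last two terms are constants in $z$, and strict convexity of $f$ together with the hypothesis $\eta + \mathrm{dom}\,f^* \subseteq \mathrm{dom}\,f^*$ guarantees that the supremum in $z$ is attained at a unique interior point characterized by the stationarity condition $f'(z) = \eta + f'(x)$. Inverting with $(f')^{-1} = (f^*)'$ yields the closed form
\begin{equation*}
\mathcal{L}_f(\eta,x) = (f^*)'\bigl(\eta + f'(x)\bigr).
\end{equation*}

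Next I would substitute this closed form into the left-hand side of \eqref{additiveB}. Setting $y = \mathcal{L}_f(\eta_2,x) = (f^*)'(\eta_2 + f'(x))$, the key identity to deploy is $f' \circ (f^*)' = \mathrm{id}$ on $\mathrm{int}\,\mathrm{dom}\,f^*$, so that $f'(y) = \eta_2 + f'(x)$. Plugging this into $\mathcal{L}_f(\eta_1,y) = (f^*)'(\eta_1 + f'(y))$ immediately gives $(f^*)'(\eta_1 + \eta_2 + f'(x)) = \mathcal{L}_f(\eta_1 + \eta_2,x)$, which is exactly what we want.

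The main obstacle, such as it is, is bookkeeping the domain conditions so that each step is legitimate: we need $f'(x)$ to exist (guaranteed since $x \in \mathrm{dom}\,f = \mathrm{int}\,\mathrm{dom}\,f$), we need $\eta_2 + f'(x) \in \mathrm{dom}\,f^*$ so that $(f^*)'$ is defined there (this is where the assumption $\eta + \mathrm{dom}\,f^* \subseteq \mathrm{dom}\,f^*$ applied to $\eta_2$ is used), and then we need $\eta_1 + f'(y) = \eta_1 + \eta_2 + f'(x) \in \mathrm{dom}\,f^*$ (which follows by applying the same inclusion to $\eta_1$, or once to $\eta_1 + \eta_2$). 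Once these inclusions are recorded, the identity is just a one-line consequence of the Legendre duality $(f')^{-1} = (f^*)'$ and the additivity of translation in the argument of $(f^*)'$.
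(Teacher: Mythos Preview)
Your proposal is correct and follows essentially the same route as the paper: derive the closed form $\mathcal{L}_f(\eta,x) = (f^*)'(\eta + f'(x))$ from the optimality condition, then use $f'\circ(f^*)' = \mathrm{id}$ to collapse the composition into $(f^*)'(\eta_1+\eta_2+f'(x))$. Your version is in fact more careful than the paper's about justifying the closed form and tracking the domain inclusions, but the argument is the same.
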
 
\begin{proof}
From \eqref{legendreB}, for all $x \in \dom f$ and $\eta_1 + \dom f^* \subseteq \dom f^*$,  we have 
${\cal L}_f(\eta_1,x) = (f^*)'(\eta_1 + f'(x)) \in \dom f.$
Therefore, we have the following:
$${\cal L}_f(\eta_2 , {\cal L}_f(\eta_1,x)) = (f^*)'(\eta_2 + f'({\cal L}_f(\eta_1,x)))
= (f^*)'(\eta_1 + \eta_2 + f'(x))
= {\cal L}_f( \eta_1 + \eta_2 , x)$$
where $\eta_2 + \dom f^* \subseteq \dom f^*$.
\end{proof}
Now, let us consider the regular Legendre transformation of the Bregman-beta divergence, that is, the extended exponential loss function.
\begin{example}
Let us assume that $\alpha \in \{(0,1) \cap \R_e\} \cup \{0, 1 \}$ and consider 
\begin{equation}\label{PhiB}
{\cal L}_{\Phi}(\eta,x) = \underset{z \in \R}{\arg\sup}\; \inprod{\eta}{z} - D_{\Phi}(z|x)
\end{equation}
Then, we get
${\cal L}_{\Phi}(\eta,x) = \exp_{\alpha,c}(\eta + \ln_{\alpha,c}(x))$.
When $\alpha=1$, \eqref{PhiB} becomes the classic exponential loss commonly used in adaboosting. Let the observed data be $(x_i,y_i) \in \R^n \times \{-1,+1\}$. Then, for binary classification, we have $\eta = y_i h(x_i)$, where $h$ is a classifier. That is, $h(x) = \inprod{x}{w} + b$ for  linear classifier and $h(x) = \inprod{g(x)}{w}$ with $w \in \R^N$ and $g = (g_1,...,g_N)$ for boosting. Here, $g_i$ is the so-called weak classifiers. Hence, $\eta \in \R$ is recommended for the general classification model. This condition is satisfied when $\alpha \in \{(0,1) \cap \R_e\} \cup \{0, 1\}$. However, when $\alpha \in \{(0,1) \cap \R_e\} \cup \{ 0 \}$, $\exp_{\alpha}(y)$ is unbounded below.  As described in \citep{woo19}, by augmenting the Perceptron loss function, we have a connection to the higher-order hinge loss:
\begin{equation}
\max(0, {\cal L}_{\Phi}(\eta,x)) = \max(0,\exp_{\alpha,c}(\eta + \ln_{\alpha,c}(x))) 
\end{equation}
In fact, if $\alpha = \frac{2k}{2k+1}$ and $c_{\alpha} = -1$ then we obtain
$
\max(0,\exp_{\alpha,c}(-y)) = c\max\left(0, 1 - y\right)^{2k+1}. 
$
This becomes the well-known hinge loss when $k=0$ (i.e., $\alpha=0$).
\end{example}
In the following, instead of the Bregman-beta divergence, we make use of the Bregman-Tweedie divergence for the classification loss function. 
\begin{theorem}
Let $\alpha  \in \{(-\infty,1) \cap \R_e\} \cup [1,2)$ and  $c \in \R_{++}$. Then, we have
\begin{equation}\label{conjL}
{\cal L}_{\Psi}(c,x) = \underset{z \in dom\Psi}{\arg\sup}\; \inprod{c}{z}  - D_{\Psi}(z|x)
\end{equation}
where $\Psi$~\eqref{basefnD} is the convex function of Legendre type. Then we have 
\begin{equation}\label{xx}
{\cal L}_{\Psi}(c,x) =
\left\{ 
\begin{array}{l}
 \ln(1 + \exp(x)) \qquad\quad\;\; \hbox{ if } \alpha = 1\\
\ln_{\alpha,c}(c + \exp_{\alpha,c}(x)) 
\quad\;\;\hbox{otherwise }\\
\end{array}
\right.
\end{equation}
where $x \in \dom \Psi$ and $c  + \dom\Psi^* \subseteq \dom \Psi^*$. 
Note that $\dom \Psi$ is defined as
\begin{equation}
\left\{
\begin{array}{l}
\alpha < 1, \alpha \in \R_e \hskip 1cm \hbox{ and } \hskip 0.5cm \dom\Psi = \R,\\
\alpha = 1, \hskip 2.2cm \hbox{ and } \hskip 0.5cm \dom\Psi = \R,\\
 1 <  \alpha < 2 \hskip 1.6cm \hbox{ and }  \hskip 0.5cm \dom\Psi  = \R_{<c_{\alpha}},\\
\end{array}
\right.
\end{equation}
When $\alpha \in \{ (0,1) \in \R_e \} \cup \{0,1\}$, we call \eqref{xx} as the Bregman-Tweedie loss function.
\end{theorem}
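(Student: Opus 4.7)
The plan is to exploit that $\Psi$ is of Legendre type (Theorem \ref{corPsi}) so that the optimization in \eqref{conjL} is solved by a single first-order condition. Strict convexity of $\Psi$ on $int(\dom \Psi)$ makes $D_{\Psi}(\cdot\,|\,x)$ strictly convex, so the objective $z \mapsto \inprod{c}{z} - D_{\Psi}(z|x)$ is strictly concave; hence any interior critical point is the unique maximizer. Differentiating in $z$ and using $\partial_z D_\Psi(z|x) = \Psi'(z) - \Psi'(x)$, the first-order condition reads
\begin{equation*}
\Psi'(z^*) = c + \Psi'(x).
\end{equation*}
By Corollary \ref{legendre_betax}, $(\Psi')^{-1} = (\Psi^*)' = \Phi' = \ln_{\alpha,c}$, while $\Psi' = \exp_{\alpha,c}$. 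The Legendre isomorphism $\Psi' : int(\dom \Psi) \to int(\dom \Psi^*)$ together with the standing hypothesis $c + \dom\Psi^* \subseteq \dom\Psi^*$ guarantees that $c + \exp_{\alpha,c}(x) \in \dom \Psi^* = \dom \ln_{\alpha,c}$, and inverting gives
\begin{equation*}
z^* = \ln_{\alpha,c}(c + \exp_{\alpha,c}(x)),
\end{equation*}
which is the formula in the $\alpha \neq 1$ branch.

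For $\alpha = 1$, I would just compute directly. Using the scaled form $\exp_{1,c}(x) = c\exp(x)$ gives $\Psi'(x) = c\exp(x)$, so the first-order condition reduces to $c\exp(z^*) = c + c\exp(x)$ and hence $z^* = \ln(1 + \exp(x))$. This is consistent with the general formula via $\ln_{1,c}(c + \exp_{1,c}(x)) = \ln(c(1+\exp(x))) - \ln c = \ln(1 + \exp(x))$, and explains why the two branches of \eqref{xx} are really the same expression written two ways.

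The main thing to be careful about is confirming $z^* \in int(\dom\Psi)$ in each of the three sub-cases of the hypothesis on $\alpha$. For $\alpha \in (-\infty,1) \cap \R_e$, $\dom\Psi = \R$ by Theorem \ref{corPsi}, so inclusion is automatic; likewise for $\alpha = 1$. The genuine content is the range $1 < \alpha < 2$, where $\dom\Psi = \R_{<c_\alpha}$ (which corresponds to the negative region $\R_{--}$ in the equivalence-class form of Theorem \ref{corPsi} after undoing the shift by $c_\alpha$). Here the inclusion $c + \dom\Psi^* \subseteq \dom\Psi^*$ is exactly what is needed so that applying $\ln_{\alpha,c}$ to $c + \exp_{\alpha,c}(x)$ lands inside $\dom\Psi$; checking this reduces to combining the explicit form of $\dom\Psi^* = \dom\Phi$ from \eqref{conditionLegendre} with $c \in \R_{++}$, which is short but is the only nontrivial step. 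The bookkeeping between the equivalence-class representative of Definition \ref{def:exp} and the scaled form $\exp_{\alpha,c}$ is the only potential source of confusion and is the part I would write out most carefully.
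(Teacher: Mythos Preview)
Your proposal is correct and follows essentially the same route as the paper: both obtain the first-order condition $\Psi'(z^*) = c + \Psi'(x)$ from the concave optimization and invert via $(\Psi')^{-1} = \Phi' = \ln_{\alpha,c}$ (Corollary~\ref{legendre_betax}), then handle the three $\alpha$-ranges separately to pin down $\dom\Psi$. You are in fact more explicit than the paper about why the critical point is the unique maximizer and about the domain bookkeeping in the $1<\alpha<2$ case, which is all to the good.
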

\begin{proof}
From 
$D_{\Psi}(z|x) = \Psi(z) - \Psi(x) - \inprod{\Psi'(x)}{z-x}$, we have 
\begin{equation*}
{\cal L}_{\Psi}(c,x) = \Phi' (c + \Psi'(x)) = \ln_{\alpha,c}(c + \exp_{\alpha,c}(x))
\end{equation*}
where $\exp_{\alpha,c}(x) = ((1-\alpha)[x])^{1/(1-\alpha)}$ and $[x] = x- c_{\alpha}$.
When $\alpha < 1$ and $\alpha \in \R_e$,  $\dom(\exp_{\alpha,c}) = \dom(\ln_{\alpha,c}) = \R$ and thus ${\cal L}_{\Psi}(c,x)$ is well defined for all $c \in \R_{++}$. When $\alpha=1$, we get the classic logistic loss function ${\cal L}(c,x) = \ln(1 + \exp(x))$. Now, let us consider $1<\alpha<2$. From \eqref{conditionLegendreD} and the equivalence class $[x] = x -c_{\alpha}$, we have $[x] = x - c_{\alpha}<0$ and thus
$\dom\Psi =  \R_{< c_{\alpha}}$. In case of $1<\alpha<2$ and $\alpha \in \R_e$, we can select $[x] = x - c_{\alpha} > 0$. However, since $c_{\alpha}>0$, $0 \not\in \dom\Psi = \R_{>c_{\alpha}}$. It does not  satisfy classification-calibration~\citep{bartlett06} condition at all and thus this region is not useful for classification.
\end{proof}
When $1<\alpha < 2$, \eqref{xx} is  the extended logistic loss function in~\citep{woo19}. In the following, we assume that \eqref{xx} with $\alpha \in \{(0,1) \cap \R_e\} \cup \{0,1\}$ as the Bregman-Tweedie loss function. Note that \eqref{xx} with $\alpha=0$ becomes the unhinge loss function~\citep{rooyen15} and \eqref{xx} with $\alpha=1$ becomes the famous logistic loss function. As observed in Figure \ref{fig:img3} (a), when $c_{\alpha}=-1$, \eqref{xx} with $\alpha \in (0,1) \cap \R_e$ behaves like the hinge loss function (H-Bregman). On the other hand, if we set $c=1$ then we get logistic loss like function (L-Bregman). See Figure \ref{fig:img3} (c) for more details. Regarding gradients of the Bregman-Tweedie loss functions, in the following corollary, we introduce the gradient of the Bregman-Tweedie loss function. 
\begin{corollary}
Let $c \in \R_{++}$. Then
the gradient of the Bregman-Tweedie loss function ${\cal L}_{\Psi}(c,x)$ with $\alpha \in (0,1) \cap \R_e$ becomes
$
\frac{\partial}{\partial x}{\cal L}_{\Psi}(c,x) = {\cal L}_{\Psi}'(c,x) = -\left(\frac{\exp_{\alpha,c}(x)}{c + \exp_{\alpha,c}(x)}\right)^{\alpha} 
$
and the domain of it is 
\begin{equation}\label{conditionLegendreDx}
\dom {\cal L}_{\Psi}' = \R_{> 2c_{\alpha}}.
\end{equation}
\end{corollary}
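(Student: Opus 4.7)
The plan is to differentiate the closed form ${\cal L}_{\Psi}(c,x) = \ln_{\alpha,c}(c + \exp_{\alpha,c}(x))$ supplied by \eqref{xx} via the chain rule, and then read off the domain from the requirement that each factor in the resulting expression be a well-defined real number. Starting from $\exp_{\alpha,c}(x) = (c^{1-\alpha} + (1-\alpha)x)^{1/(1-\alpha)}$, a one-line differentiation cancels the prefactor $(1-\alpha)$ and yields the self-map identity $\exp_{\alpha,c}'(x) = \exp_{\alpha,c}(x)^{\alpha}$. Similarly, rewriting $\ln_{\alpha,c}(y) = (y^{1-\alpha}-c^{1-\alpha})/(1-\alpha)$ and differentiating gives $y^{-\alpha}$. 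Chain rule and grouping the resulting factors produces $\exp_{\alpha,c}(x)^{\alpha}(c+\exp_{\alpha,c}(x))^{-\alpha}$; since $\alpha \in \R_e$, the identity $(u/v)^{\alpha}=u^{\alpha}/v^{\alpha}$ collects this into the ratio displayed in the corollary.

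To pin down the domain I would analyze when the denominator $c+\exp_{\alpha,c}(x)$ is a nonzero real. Because $\alpha \in (0,1)\cap \R_e$ forces $1-\alpha\in\R_o$, Table \ref{tableE} confirms that $\exp_{\alpha,c}$ is a strictly increasing bijection $\R \to \R$, so the sign of $c+\exp_{\alpha,c}(x)$ switches at a single crossing. Solving $\exp_{\alpha,c}(x)=-c$ by raising both sides to the $(1-\alpha)$-th power and invoking $(-c)^{1-\alpha}=-c^{1-\alpha}$ (valid because $1-\alpha\in\R_o$) gives $x = -2c^{1-\alpha}/(1-\alpha) = 2c_{\alpha}$. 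Monotonicity then converts this into $c+\exp_{\alpha,c}(x) > 0 \iff x > 2c_{\alpha}$, which is exactly \eqref{conditionLegendreDx}.

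The main obstacle is not the calculus but the arithmetic of rational-exponent powers on negative reals, which must be handled using the classification $\R_e,\R_o,\R_x$ from \eqref{Rclass} at two distinct moments: once to guarantee $(-c)^{1-\alpha}=-c^{1-\alpha}$ when solving for the critical point, and once to justify distributing $(\cdot)^{\alpha}$ across a ratio whose numerator and denominator can have opposite signs on the sub-interval $(2c_{\alpha},\,c_{\alpha})$ just below the classification margin. Once those subtleties are dispatched, the remainder is routine chain-rule bookkeeping.
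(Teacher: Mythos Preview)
Your approach mirrors the paper's for the domain claim: both locate the unique zero of $c+\exp_{\alpha,c}(\cdot)$ at $x=2c_{\alpha}$ via monotonicity of $\exp_{\alpha,c}$ and declare the half-line accordingly. You go further than the paper by actually carrying out the chain-rule derivation of the gradient formula, which the paper leaves implicit.

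Two points deserve attention. First, your chain-rule output is $+\,\exp_{\alpha,c}(x)^{\alpha}(c+\exp_{\alpha,c}(x))^{-\alpha}$, whereas the corollary records a minus sign; this is a sign slip in the statement (consistent with the $-x$ argument used downstream in \eqref{binClassloss} and Figure~\ref{fig:img3}), and you have silently absorbed it rather than flagging it. Second, your domain argument stops at ``$c+\exp_{\alpha,c}(x)>0 \iff x>2c_{\alpha}$,'' but since $-\alpha\in\R_e$ the factor $(c+\exp_{\alpha,c}(x))^{-\alpha}$ is real-valued on \emph{both} sides of $2c_{\alpha}$; nonvanishing of the denominator alone does not single out $\R_{>2c_{\alpha}}$. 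The paper closes this by appealing to the standing convention that domains are convex sets and to the classification-calibration requirement that $0$ lie in the domain (note $c_{\alpha}<0$ for $\alpha\in(0,1)$, $c>0$), explicitly discarding the branch $\R_{<2c_{\alpha}}$. Your argument would be complete with the same selection step.
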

\begin{proof}
Let $0< \alpha < 1$ and $\alpha \in \R_e$, then we have rather complicated domain restriction. In fact, since $\Psi$ is the convex function of Legendre, $\Psi'(x) = \exp_{\alpha,c}(x)$ is monotone increasing on its domain $\R$. Thus, there is $a \in \R$ satisfying $c + \exp_{\alpha,c}(a) = 0.$ By simple calculation, we have $a = 2c_{\alpha}$. Thus, ${\cal L}_{\Psi}'(c,2c_{\alpha})$ is undefined and $\dom{\cal L}_{\Psi}'$ is restricted to $\R_{>2c_{\alpha}}$. Notice that, we can select $\R_{<2c_{\alpha}}$ as $\dom{\cal L}'_{\Psi}$. In this case, since $c_{\alpha}<0$, $0 \not\in \R_{<2c_{\alpha}}$ and thus this region is not useful for classification~\citep{bartlett06}.
\end{proof}
Figure \ref{fig:img3} (b) and (d) demonstrate the gradient of the Bregman-Tweedie loss function ${\cal L}_{\Psi}(c,-x) = \ln_{\alpha,c}(c + \exp_{\alpha,c}(-x))$. As observed in Figure \ref{fig:img3} (b), when $c_{\alpha}=-1$, we have $\dom {\cal L}_{\Psi}'(c,-x) = \R_{<2}$ and does not depends on $\alpha$. However, when $\alpha=20/101$ and $c=1$ (Figure \ref{fig:img3} (d)), we have $2c_{\alpha} = -101/41 = -2.46$ and thus $\dom {\cal L}_{\Psi}'(1,-x) = \R_{<2.46}$. That is, $\dom{\cal L}_{\Psi}'$ depends on the parameter $\alpha$. Due to the strict restriction of the domain of the gradient of the Bregman-Tweedie loss function, the role of this function for classification is rather limited. However, by simply restricting the domain of the data set, we can overcome this drawback. Let $(x_i,y_i) \in {\cal X} \times \{-1,+1\}$ be training data of the binary classification problem. Note that ${\cal X} = \{ x \in \R \;|\; \norm{x}_{1} <  B_X \}$ where $B_X \in \R_{++}$ is a constant. The corresponding linear decision boundary is given as ${\cal H} = \{ h(x) = \inprod{w}{x} + b \;|\; (w,b) \in {\cal W} \}$ with ${\cal W} = \{(w,b) \in \R^n \times \R  \;|\; \norm{(w,b)}_{\infty} < B_W \}$ and $B_W \in \R_{++}$ is a constant. Hence, we have $\abs{h(x)} \le B$ for some appropriate constant $B\; (\ge (B_X + 1)B_W)$. When $B < -2c_{\alpha}$ for $\alpha \in (0,1) \cap \R_e$, we can use the Bregman-Tweedie loss function ${\cal L}_{\Psi}(c,-x)$ for the classification problem. Let us consider 
\begin{equation}\label{datanorm}
\abs{y_i(\inprod{w}{x_i}+b)} \le (B_{X} + 1)B_W \le \rho\abs{c_{\alpha}}
\end{equation}
where $\rho < 2$. Since we minimize with respect to $(w,b)$, it is not easy to use the bound in \eqref{datanorm}. Hence, we rescale the given data by $x_i \define x_i/(B_X+1).$ Then, we can set $B_W= \rho\abs{c_{\alpha}}$ with $\rho \in (1,2)$. 

Now, let us introduce the {\it Bregman-Tweedie classification model}:
\begin{equation}\label{binClass}
\min_{ (w,b) \in {\cal W}}\; {\cal H}(w,b) + \lambda \norm{w}_2^2
\end{equation}
where $
{\cal W} = \{ (w,b) \in \R^n \times \R \;|\; \norm{(w,b)}_{\infty} < \rho \abs{c_{\alpha}} \}
$
and 
the Bregman-Tweedie loss function is defined as
\begin{equation}\label{binClassloss}
{\cal H}(w,b) = \sum_{i=1}^n\;  \ln_{\alpha,c}(c + \exp_{\alpha,c}(-y_i(\inprod{w}{x_i}+b))).
\end{equation}
The numerical experiments with the proposed Bregman-Tweedie classification model~\eqref{binClass} are given in the following Section \ref{sec4}.

\begin{figure*}[t]
\centering
\includegraphics[width=5.5in]{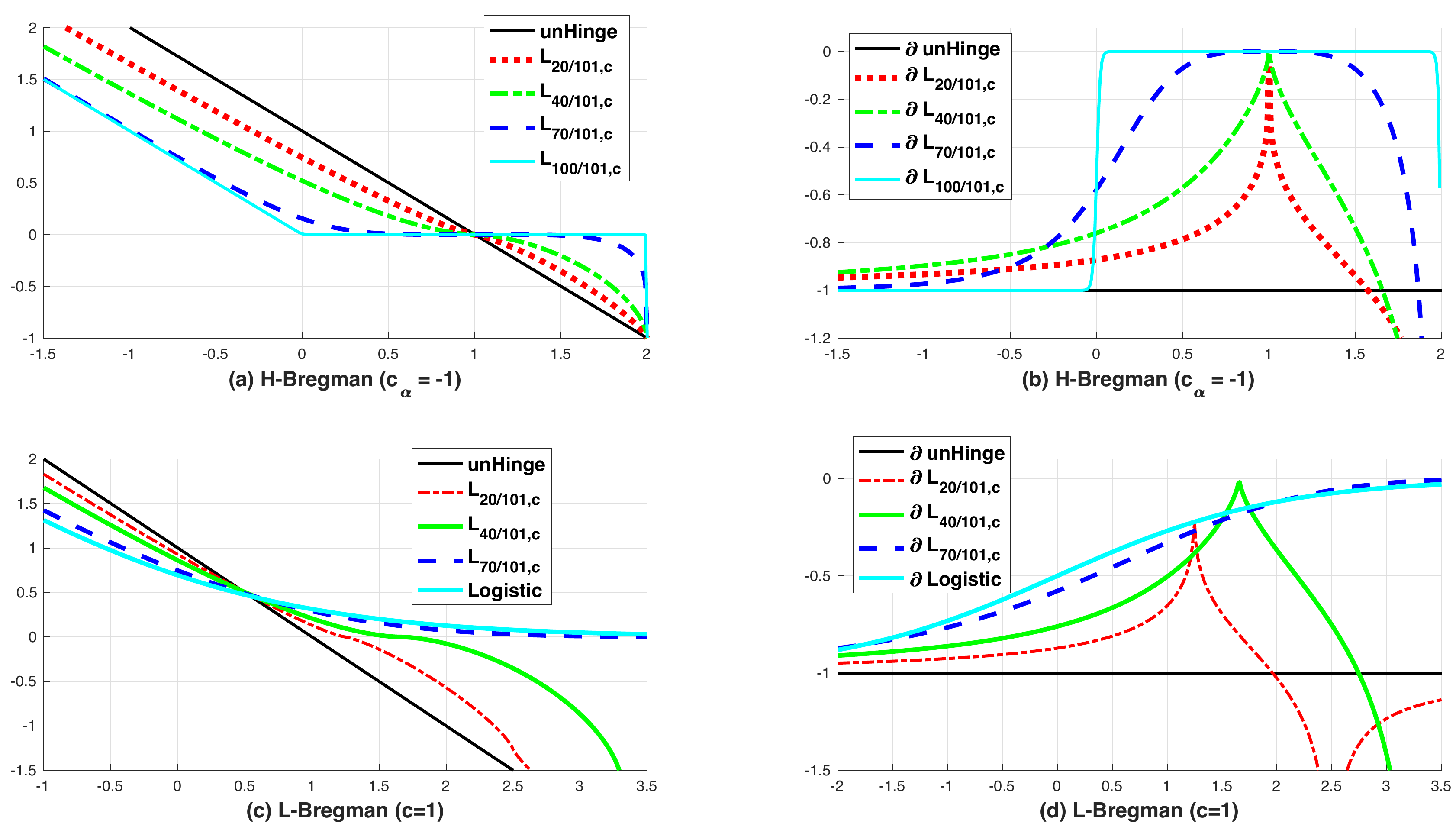} 
\caption{Graphs of the Bregman-Tweedie loss function ${\cal L}_{\Psi}(c,-x)$; (a) The Bregman-Tweedie loss with $c_{\alpha}=-1$ ($\alpha = 20/101,40/101,70/101,100/101$). (b) The gradient of the Bregman-Tweedie loss in (a). We have $\dom {\cal L}'_{\Psi}(c,-x) = \R_{<2}$. (c) Bregman-Tweedie loss with with $c=1$ ($\alpha = 20/101,40/101,70/101$). (d) The gradient of the Bregman-Tweedie loss in (c). When $\alpha=20/101$, we have $2c_{\alpha} = -2.46$ and thus $\dom {\cal L}'_{\Psi}(c,-x) = \R_{<2.46}$. 
}
\label{fig:img3}
\end{figure*}

 \section{Numerical experiments for Bregman-Tweedie logistic regression model\label{sec4}} 
This Section compares the performance of the proposed Bregman-Tweedie classification model~\eqref{binClass} with the logistic regression and SVM for the problem of learning linear decision boundary. 

\begin{figure*}[t]
\centering
\includegraphics[width=5.5in]
{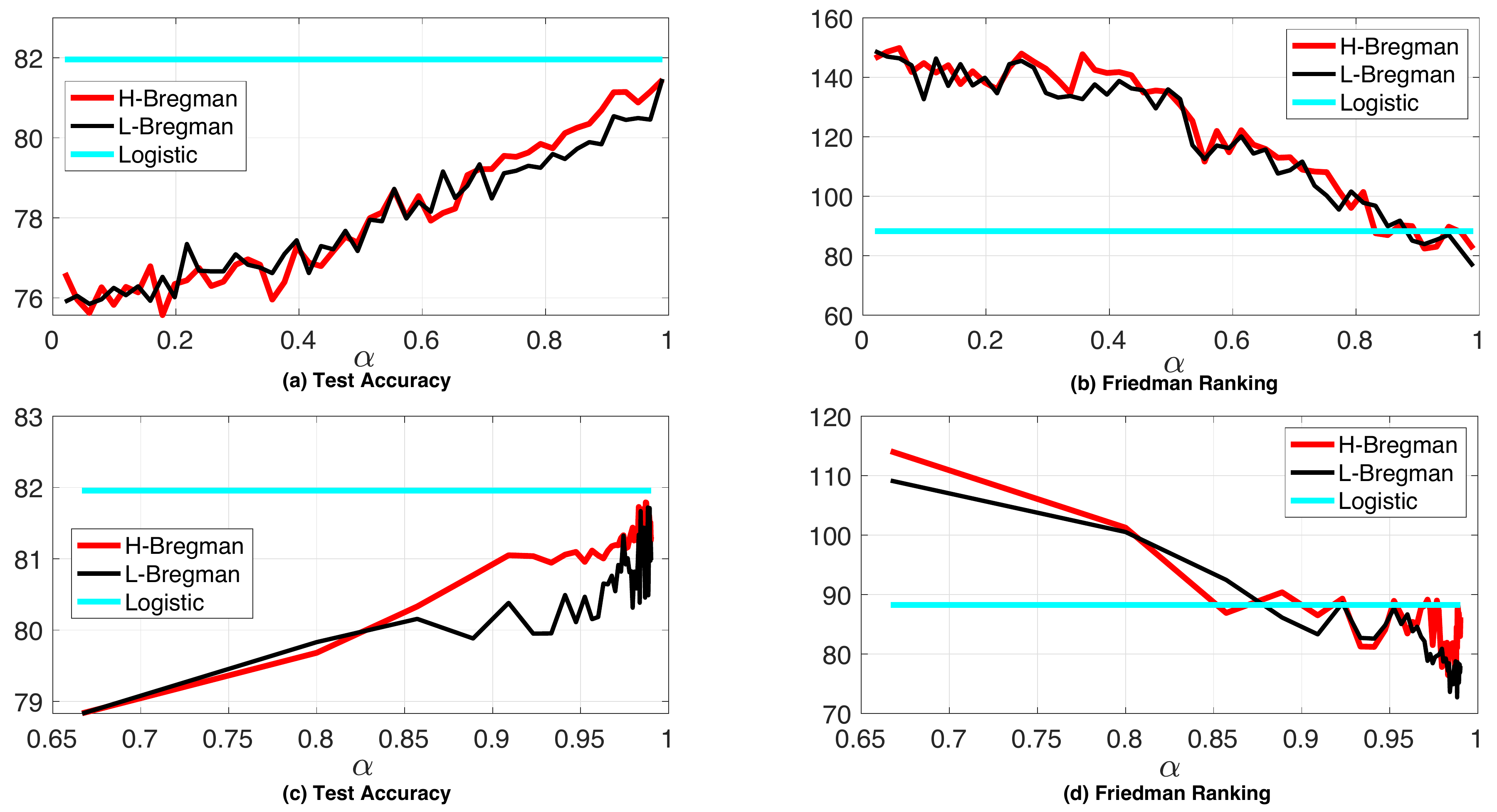}
\caption{A comparison of the Bregman-Tweedie classification model (H-Bregman ($c_{\alpha}=1$) and L-Bregman ($c=1$)) with logistic regression~\citep{fan08}. In (a) and (b), we set $\alpha  = 2k/101$ with $k=1,...,50$. In (c) and (d), we set $\alpha = 2k/(2k+1)$ with $k=1,...,50$. Note that (a) and (c) report the test classification accuracy. (b) and (d) report the Friedman  ranking. In terms of Friedman ranking, the Bregman-Tweedie classification model shows better performance than logistic regression when $\alpha \approx 1$. Note that H-Bregman is better than L-Bregman in terms of test classification accuracy when $\alpha>0.85$. L-Bregman is better than H-Bregman in terms of Friedman ranking when $\alpha>0.95$.}
\label{fig:imgx1}
\end{figure*}

For the minimization of the Bregman-Tweedie classification model~\eqref{binClass}, we use a limited-memory projected quasi-Newton ({\it minConf\_PQN} in \citep{mark19}). This algorithm is a typical constraint optimization algorithm implemented with the MATLAB. We use the famous LIBLINEAR package~\citep{fan08} for the benchmark of the proposed classification model~\eqref{binClass}. Among various linear classification models in  LIBLINEAR, we select typical models; logistic regression and higher-order SVM~
(the first-order SVM and the second-order SVM (i.e., L2SVM)). For logistic regression, we use the primal formulation ($s=0$). For SVM, we use the dual formulation ($s=3$). For L2SVM, we use the primal formulation ($s=2$). We also use the bias term in LIBLINEAR ($B=1$). All models have $\ell_2$-regularization term. 
As regards the regularization parameter $\lambda$, we simply use the following parameter space of $\lambda$ as recommended in the LIBSVM~\citep{chang11}.
\begin{equation}\label{lambdaD}
\lambda = 2^{b}, \; b = -14, -13, -12, ... , 4, 5
\end{equation} 
In the models of LIBLINEAR, the regularization parameter is located on the loss function and thus we use $\lambda^{-1}$ of \eqref{lambdaD} for the regularization parameter of them. For the best regularization parameter $\lambda$, we use four-fold cross validation~\citep{delgado14}. 

In terms of parameter space of the Bregman-Tweedie  loss function, we need to select not only the regularization parameter $\lambda$ but also the model parameter $\alpha$ and $c$. We categorize the Bregman-Tweedie classification model (for simplicity, we only consider $\alpha \in (0,1) \cap \R_e$) into two different sub-models (H-Bregman and L-Bregman). The {H-Bregman} is the hinge-like Bregman-Tweedie classification model ($c_{\alpha}=1$) and the  {L-Bregman} is the logistic-like Bregman-Tweedie classification model ($c=1$).  

For the benchmark dataset, we use the well-organized datasets in~\citep{delgado14}, while reporting the performance of the Bregman-Tweedie classification  models. They are pre-processed and normalized in each feature dimension with mean zero and variance one. Additionally, each data $x_i$ is normalized by $x_i/(B_X+1)$ as mentioned in \eqref{datanorm}. Also, we set $\rho=1.5$ for ${\cal W} = \{ (w,b) \in \R^n \times \R \;|\; \norm{(w,b)}_{\infty} < \rho \abs{c_{\alpha}}\}$. The raw format of each data is available in UCI machine learning repository. Note that, as commented in~\citep{wainberg16}, we reorganize the dataset in \citep{delgado14}. First, each dataset is separated into the training and testing data set which are not overlapped. Each training data set is randomly shuffled for four-fold cross validation. Among the dataset in~\citep{delgado14}, we use fifty-one two-class classification datasets after removing ambiguous dataset in terms of data splitting strategy. In Table~\ref{tableFullData}, we list up all information of datasets such as number of instances, number of train data, number of test data, feature dimension, and number of classes.  

The whole experiments are run five times and the averaged test score of each dataset is reported in Table \ref{tableFullDatax}. In each experiment, the best regularization parameters are chosen through the four-fold cross-validation. With the chosen best parameter, we minimize the proposed Bregman-Tweedie classification model~\eqref{binClass} with the whole training data in Table \ref{tableFullData} to find the hyperplane $(w,b) \in \R^n \times \R$. Then we evaluate the performance of each classification model with test data in Table \ref{tableFullData}. For more details on cross-validation-based approach, see \citep{chang11}. 

In Figure \ref{fig:imgx1}, we plot the test classification accuracy ((a) and (c)) and Friedman ranking ((b) and (d)) of the Bregman-Tweedie classification model (H-Bregman and L-Bregman).  We set $\alpha=\frac{2k}{101}$ with $k=1,...,50$ for Figure \ref{fig:imgx1} (a) and (b) and $\alpha=\frac{2k}{2k+1}$ with $k=1,...,50$ for Figure \ref{fig:imgx1} (c) and (d). Note that the performance evaluation at each $\alpha$ is the average score of the five times repeated test accuracy of all dataset in Table \ref{tableFullData}. As $\alpha \rightarrow 1$, the proposed Bregman-Tweedie classification model (H-Bregman and L-Bregman) shows better performance. Especially, in terms of Friedman ranking, the proposed model obtains better performance than the classic logistic regression when $\alpha \approx 1$. Interestingly, H-Bregman is better than L-Bregman with respect to the test classification accuracy and L-Bregman is better than H-Bregman with respect to the Friedman ranking. Among various $\alpha$ in Figure \ref{fig:imgx1}, we select five $\alpha$ having best classification accuracy. That is, $\alpha = 58/59$(HB1), $68/69$(HB2), $76/77$(HB3), $78/79$(HB4), $90/91$(HB5) for H-Bregman and $\alpha = 62/63$(LB1), $70/71$(LB2), $80/81$(LB3), $84/85$(LB4), $92/93$(LB5) for L-Bregman. All numerical results for each dataset with HB1-HB5 and LB1-LB5 are summarized in Table \ref{tableFullDatax}. In terms of Friedman ranking, LB4 ($\alpha=84/85$) shows the best performance. However, the logistic regression in LIBLINEAR~\citep{fan08} obtains the best performance in terms of test classification accuracy.

\begin{table}
\tiny
\centerline{
\begin{tabular}{|l||c|c|c|c|c|}
\hline
&\textbf{Instance}&\textbf{Train}&\textbf{Test}&\textbf{Feature dim}&\textbf{Class}\\\hline\hline
\textbf{acute-inflammation  }&120&60&60&6&2\\\hline
\textbf{acute-nephritis  }&120&60&60&6&2\\\hline
\textbf{adult  }&48842&32561&16281&14&2\\\hline
\textbf{balloons  }&16&8&8&4&2\\\hline
\textbf{bank  }&4521&2261&2260&16&2\\\hline
\textbf{blood  }&748&374&374&4&2\\\hline
\textbf{breast-cancer  }&286&143&143&9&2\\\hline
\textbf{breast-cancer-wisc  }&699&350&349&9&2\\\hline
\textbf{breast-cancer-wisc-diag  }&569&285&284&30&2\\\hline
\textbf{breast-cancer-wisc-prog  }&198&99&99&33&2\\\hline
\textbf{chess-krvkp  }&3196&1598&1598&36&2\\\hline
\textbf{congressional-voting  }&435&218&217&16&2\\\hline
\textbf{conn-bench-sonar-mines-rocks  }&208&104&104&60&2\\\hline
\textbf{connect-4  }&67557&33779&33778&42&2\\\hline
\textbf{credit-approval  }&690&345&345&15&2\\\hline
\textbf{cylinder-bands  }&512&256&256&35&2\\\hline
\textbf{echocardiogram  }&131&66&65&10&2\\\hline
\textbf{fertility  }&100&50&50&9&2\\\hline
\textbf{haberman-survival  }&306&153&153&3&2\\\hline
\textbf{heart-hungarian  }&294&147&147&12&2\\\hline
\textbf{hepatitis  }&155&78&77&19&2\\\hline
\textbf{hill-valley  }&606&303&303&100&2\\\hline
\textbf{horse-colic  }&368&300&68&25&2\\\hline
\textbf{ilpd-indian-liver  }&583&292&291&9&2\\\hline
\textbf{ionosphere  }&351&176&175&33&2\\\hline
\textbf{magic  }&19020&9510&9510&10&2\\\hline
\textbf{miniboone  }&130064&65032&65032&50&2\\\hline
\textbf{molec-biol-promoter  }&106&53&53&57&2\\\hline
\textbf{mammographic  }&961&481&480&5&2\\\hline
\textbf{mushroom  }&8124&4062&4062&21&2\\\hline
\textbf{musk-1  }&476&238&238&166&2\\\hline
\textbf{musk-2  }&6598&3299&3299&166&2\\\hline
\textbf{oocytes-merluccius-nucleus-4d  }&1022&511&511&41&2\\\hline
\textbf{oocytes-trisopterus-nucleus-2f  }&912&456&456&25&2\\\hline
\textbf{ozone  }&2536&1268&1268&72&2\\\hline
\textbf{parkinsons  }&195&98&97&22&2\\\hline
\textbf{pima  }&768&384&384&8&2\\\hline
\textbf{pittsburg-bridges-T-OR-D  }&102&51&51&7&2\\\hline
\textbf{planning  }&182&91&91&12&2\\\hline
\textbf{ringnorm  }&7400&3700&3700&20&2\\\hline
\textbf{spambase  }&4601&2301&2300&57&2\\\hline
\textbf{spect  }&265&79&186&22&2\\\hline
\textbf{spectf  }&267&80&187&44&2\\\hline
\textbf{statlog-australian-credit  }&690&345&345&14&2\\\hline
\textbf{statlog-german-credit  }&1000&500&500&24&2\\\hline
\textbf{statlog-heart  }&270&135&135&13&2\\\hline
\textbf{tic-tac-toe  }&958&479&479&9&2\\\hline
\textbf{titanic  }&2201&1101&1100&3&2\\\hline
\textbf{trains  }&10&5&5&29&2\\\hline
\textbf{twonorm  }&7400&3700&3700&20&2\\\hline
\textbf{vertebral-column-2clases  }&310&155&155&6&2\\\hline
\end{tabular}
}
\caption{The list of all two-class datasets used in this article. This is a corrected version of dataset available in \citep{delgado14} based on~\citep{wainberg16}. The most dataset in this Table is available in UCI repository as raw formats.}\label{tableFullData}
\end{table}

\begin{table*}
\centerline{
\begin{tiny}\begin{tabular}{|l||c|c|c|c|c||c|c|c|c|c||c|c|c|}
\hline
&\textbf{HB1}&\textbf{HB2}&\textbf{HB3}&\textbf{HB4}&\textbf{HB5}&\textbf{LB1}&\textbf{LB2}&\textbf{LB3}&\textbf{LB4}&\textbf{LB5}&\textbf{Logistic}&\textbf{SVM}&\textbf{L2SVM}\\\hline\hline
\textbf{acute-inflammation  }&{\bf 100.00}&{\bf 100.00}&{\bf 100.00}&{\bf 100.00}&{\bf 100.00}&{\bf 100.00}&{\bf 100.00}&{\bf 100.00}&{\bf 100.00}&{\bf 100.00}&{\bf 100.00}&{\bf 100.00}&{\bf 100.00}\\\hline
\textbf{acute-nephritis  }&{\bf 100.00}&{\bf 100.00}&{\bf 100.00}&{\bf 100.00}&{\bf 100.00}&{\bf 100.00}&{\bf 100.00}&{\bf 100.00}&{\bf 100.00}&{\bf 100.00}&{\bf 100.00}&{\bf 100.00}&{\bf 100.00}\\\hline
\textbf{adult  }&84.15&84.15&84.16&84.16&84.15&84.15&84.16&84.17&84.16&84.15&84.29&{\bf 84.32}&84.09\\\hline
\textbf{balloons  }&87.50&87.50&87.50&87.50&87.50&{\bf 100.00}&87.50&87.50&{\bf 100.00}&{\bf 100.00}&87.50&87.50&87.50\\\hline
\textbf{bank  }&89.03&89.03&{\bf 89.07}&{\bf 89.07}&{\bf 89.07}&89.03&89.03&89.03&89.03&89.03&88.81&88.50&88.81\\\hline
\textbf{blood  }&{\bf 76.20}&{\bf 76.20}&{\bf 76.20}&{\bf 76.20}&{\bf 76.20}&{\bf 76.20}&{\bf 76.20}&{\bf 76.20}&{\bf 76.20}&{\bf 76.20}&75.94&{\bf 76.20}&75.67\\\hline
\textbf{breast-cancer  }&72.03&72.03&{\bf 72.73}&{\bf 72.73}&70.63&72.03&72.03&72.03&72.03&72.03&71.33&69.23&71.33\\\hline
\textbf{breast-cancer-wisc  }&96.56&96.56&{\bf 96.85}&{\bf 96.85}&96.56&96.56&96.56&96.56&96.56&96.56&96.56&96.50&96.56\\\hline
\textbf{breast-cancer-wisc-d}&{\bf 98.94}&98.59&96.48&98.59&98.59&98.59&98.59&98.59&98.59&98.59&97.89&97.54&97.89\\\hline
\textbf{breast-cancer-wisc-p}&79.80&78.79&{\bf 81.82}&81.21&{\bf 81.82}&79.80&79.80&79.80&79.80&79.80&70.71&75.76&76.77\\\hline
\textbf{chess-krvkp  }&95.68&95.74&95.99&96.06&96.25&95.74&95.81&96.18&96.25&96.31&{\bf 96.62}&96.47&96.56\\\hline
\textbf{congressional-voting}&58.53&60.83&60.83&60.37&60.37&61.29&61.29&61.29&61.29&61.29&55.76&{\bf 61.75}&57.14\\\hline
\textbf{conn-bench-sonar-}&{\bf 77.88}&76.15&{\bf 77.88}&75.96&75.96&75.00&75.00&75.00&75.00&75.00&74.04&{\bf 77.88}&74.04\\\hline
\textbf{connect-4  }&{\bf 75.48}&75.46&75.47&{\bf 75.48}&75.47&75.47&75.47&75.47&{\bf 75.48}&75.47&75.47&75.38&75.41\\\hline
\textbf{credit-approval  }&{\bf 89.28}&{\bf 89.28}&88.70&88.70&88.70&{\bf 89.28}&{\bf 89.28}&{\bf 89.28}&{\bf 89.28}&{\bf 89.28}&88.12&87.54&87.83\\\hline
\textbf{cylinder-bands  }&65.23&65.23&65.23&65.23&70.70&65.23&65.23&65.23&65.23&65.23&74.61&{\bf 76.17}&74.61\\\hline
\textbf{echocardiogram  }&80.00&80.00&78.46&78.46&80.00&80.00&80.00&80.00&80.00&80.00&83.08&{\bf 87.69}&84.62\\\hline
\textbf{fertility  }&86.00&86.00&{\bf 88.00}&{\bf 88.00}&{\bf 88.00}&86.00&86.00&86.00&86.00&86.00&{\bf 88.00}&{\bf 88.00}&86.00\\\hline
\textbf{haberman-survival  }&{\bf 74.51}&73.20&73.20&73.20&73.20&73.86&73.20&73.20&73.86&73.20&73.86&73.73&73.86\\\hline
\textbf{heart-hungarian  }&{\bf 88.44}&87.76&87.76&87.76&87.07&{\bf 88.44}&{\bf 88.44}&{\bf 88.44}&{\bf 88.44}&{\bf 88.44}&87.76&83.67&87.07\\\hline
\textbf{hepatitis  }&{\bf 77.92}&{\bf 77.92}&{\bf 77.92}&{\bf 77.92}&{\bf 77.92}&{\bf 77.92}&{\bf 77.92}&{\bf 77.92}&{\bf 77.92}&{\bf 77.92}&{\bf 77.92}&72.99&76.62\\\hline
\textbf{hill-valley  }&57.76&57.23&57.16&57.23&56.77&57.10&57.10&57.10&57.29&57.29&{\bf 80.20}&65.54&66.34\\\hline
\textbf{horse-colic  }&{\bf 88.24}&{\bf 88.24}&{\bf 88.24}&{\bf 88.24}&{\bf 88.24}&{\bf 88.24}&{\bf 88.24}&{\bf 88.24}&{\bf 88.24}&{\bf 88.24}&{\bf 88.24}&86.76&{\bf 88.24}\\\hline
\textbf{ilpd-indian-liver  }&71.48&72.16&71.48&71.82&72.16&72.16&72.16&72.16&72.16&72.16&70.45&71.48&{\bf 72.51}\\\hline
\textbf{ionosphere  }&87.43&86.86&85.14&84.00&85.71&86.86&86.86&86.86&86.86&86.86&{\bf 88.57}&{\bf 88.57}&86.86\\\hline
\textbf{magic  }&79.26&79.20&79.19&79.20&79.18&79.22&79.22&79.22&79.22&79.23&79.10&{\bf 79.64}&78.99\\\hline
\textbf{miniboone  }&87.06&86.59&87.15&87.26&87.05&87.23&87.32&87.36&87.41&87.32&90.36&{\bf 90.44}&87.86\\\hline
\textbf{molec-biol-promoter  }&73.96&75.47&76.98&75.85&76.60&75.47&75.47&75.47&75.47&75.47&{\bf 78.49}&75.47&76.98\\\hline
\textbf{mammographic  }&{\bf 83.75}&{\bf 83.75}&83.67&83.67&83.67&{\bf 83.75}&{\bf 83.75}&{\bf 83.75}&{\bf 83.75}&{\bf 83.75}&83.71&83.33&82.92\\\hline
\textbf{mushroom  }&94.50&94.48&94.48&94.46&94.46&94.50&94.48&94.46&94.46&94.46&94.46&{\bf 97.69}&93.99\\\hline
\textbf{musk-1  }&83.45&82.69&81.68&82.27&81.51&82.02&81.93&81.68&81.68&81.93&82.94&{\bf 84.62}&83.53\\\hline
\textbf{musk-2  }&90.74&91.82&92.11&92.21&92.80&90.95&91.89&92.28&92.50&92.71&94.74&{\bf 95.02}&94.85\\\hline
\textbf{oocytes-merluccius-  }&78.71&79.22&78.79&79.22&78.90&79.30&79.45&79.14&79.69&79.30&82.74&80.98&{\bf 82.97}\\\hline
\textbf{oocytes-trisopterus-  }&79.39&80.00&79.82&79.43&80.44&79.87&80.04&80.13&80.04&80.26&78.73&{\bf 80.61}&79.17\\\hline
\textbf{ozone  }&{\bf 97.16}&{\bf 97.16}&{\bf 97.16}&{\bf 97.16}&97.13&{\bf 97.16}&{\bf 97.16}&97.13&97.13&97.13&97.15&97.10&{\bf 97.16}\\\hline
\textbf{parkinsons  }&81.44&81.44&83.92&83.92&82.06&82.27&82.27&82.27&82.27&82.27&82.47&84.12&{\bf 84.54}\\\hline
\textbf{pima  }&76.30&76.15&76.15&76.35&{\bf 76.46}&75.94&75.94&75.94&75.94&75.94&76.30&75.78&75.73\\\hline
\textbf{pittsburg-bridges-T-}&88.24&88.24&88.24&88.63&88.24&88.24&88.24&88.24&88.24&88.24&{\bf 90.20}&86.27&88.63\\\hline
\textbf{planning  }&{\bf 71.43}&71.21&71.21&70.99&70.99&{\bf 71.43}&{\bf 71.43}&{\bf 71.43}&{\bf 71.43}&{\bf 71.43}&65.05&{\bf 71.43}&65.27\\\hline
\textbf{ringnorm  }&77.73&77.77&77.76&77.77&77.76&{\bf 77.79}&{\bf 77.79}&{\bf 77.79}&{\bf 77.79}&{\bf 77.79}&76.87&77.48&77.09\\\hline
\textbf{spambase  }&93.04&{\bf 93.06}&93.03&92.99&92.98&{\bf 93.06}&{\bf 93.06}&{\bf 93.06}&{\bf 93.06}&{\bf 93.06}&92.22&92.79&92.20\\\hline
\textbf{spect  }&61.29&62.15&61.18&61.18&61.29&60.97&60.97&60.97&60.97&60.97&65.05&{\bf 66.13}&61.83\\\hline
\textbf{spectf  }&48.24&48.66&{\bf 49.09}&46.52&48.13&48.24&48.24&48.24&48.24&48.24&45.03&44.81&48.45\\\hline
\textbf{statlog-australian-}&67.59&{\bf 67.83}&{\bf 67.83}&{\bf 67.83}&{\bf 67.83}&{\bf 67.83}&{\bf 67.83}&{\bf 67.83}&{\bf 67.83}&{\bf 67.83}&66.96&{\bf 67.83}&66.96\\\hline
\textbf{statlog-german-}&{\bf 77.16}&76.24&76.20&76.04&76.00&77.00&77.00&77.00&77.00&77.00&{\bf 77.16}&75.52&76.92\\\hline
\textbf{statlog-heart  }&87.26&88.30&{\bf 88.74}&{\bf 88.74}&87.41&88.15&88.15&88.15&88.15&88.15&87.41&87.26&88.15\\\hline
\textbf{tic-tac-toe  }&{\bf 97.91}&{\bf 97.91}&{\bf 97.91}&{\bf 97.91}&{\bf 97.91}&{\bf 97.91}&{\bf 97.91}&{\bf 97.91}&{\bf 97.91}&{\bf 97.91}&{\bf 97.91}&{\bf 97.91}&{\bf 97.91}\\\hline
\textbf{titanic  }&{\bf 77.55}&{\bf 77.55}&{\bf 77.55}&{\bf 77.55}&{\bf 77.55}&{\bf 77.55}&{\bf 77.55}&{\bf 77.55}&{\bf 77.55}&{\bf 77.55}&{\bf 77.55}&{\bf 77.55}&{\bf 77.55}\\\hline
\textbf{trains  }&76.00&76.00&76.00&{\bf 80.00}&64.00&60.00&60.00&60.00&60.00&60.00&60.00&60.00&60.00\\\hline
\textbf{twonorm  }&97.76&97.74&97.71&97.71&97.71&{\bf 97.78}&{\bf 97.78}&{\bf 97.78}&{\bf 97.78}&{\bf 97.78}&97.68&97.58&97.51\\\hline
\textbf{vertebral-column-2c}&83.10&81.03&81.55&81.55&82.58&82.58&82.58&82.58&82.58&82.58&{\bf 83.74}&81.16&81.03\\\hline\hline
\textbf{Mean}&81.73&81.70&81.79&81.79&81.60&81.67&81.44&81.44&81.72&81.71&{\bf 81.96}&81.92&81.66\\\hline
\textbf{Friedman Ranking}& 6.73&7.21&7.08&6.82&7.73&6.63&6.68&6.81&{\bf 6.25}&6.46&7.19&7.40&8.03\\\hline
\end{tabular}
\end{tiny}}
\caption{A comparison of H-Bregman (HB1:$\alpha=58/59$, HB2:$\alpha=68/69$, HB3:$\alpha=76/77$, HB4:$\alpha=78/79$, HB5:$\alpha=90/91$), L-Bregman (LB1:$\alpha=62/63$, LB2:$\alpha=70/71$, LB3:$\alpha=80/81$, LB4:$\alpha=84/85$, LB5:$\alpha=92/93$), and LIBLINEAR (logistic regression, SVM, and L2SVM). In terms of Friedman ranking, LB4 shows the best performance. On the other hand, in terms of test classification accuracy, logistic regression in LIBLINEAR~\citep{fan08} shows the best performance.}\label{tableFullDatax}
\end{table*}

 \section{Conclusion\label{sec6}}
In this article, we have introduced the extended exponential function and the high-level structure based on this function, such as, the convex function of Legendre type and the Bregman-Tweedie divergence. Also, we show that the Bregman-Tweedie loss function can be derived from the regular Legendre transformation of the Bregman-Tweedie divergence. The proposed Bregman-Tweedie classification model ($\alpha \in (0,1) \cap \R_e$) have two sub-models; H-Bregman (with hinge-like loss function and $c_{\alpha} = -1$) and L-Bregman (with logistic-like loss function and $c=1$). The H-Bregman and L-Bregman outperform the classic logistic regression, SVM, and L2SVM in terms of the Friedman ranking and show reasonable performance in terms of classification accuracy when $\alpha \approx 1$.

\section*{Acknowledgments}
This paper is supported by the Basic Science Program through the NRF of Korea funded by the Ministry of Education (NRF-2015R101A1A01061261).

\appendix
\section*{Appendix}
In this Appendix, we summarize several useful Lemmas. 

\begin{lemma}\label{lemma4}
There is a one-sided inverse relation between the extended logarithmic function~\eqref{exlog}   and the extended exponential function~\eqref{exexp} within the reduced domain. That is, when $x \in \dom (\ln_{\alpha})$ in Table \ref{tableL}, except the case $x \in \R_{--}$ with $\alpha \in \R_o \setminus \{ 1 \}$, the following is satisfied.
\begin{equation}\label{bijection-1}
\exp_{\alpha}(\ln_{\alpha}(x)) = x.
\end{equation}
In addition, if $y \in \dom (\exp_{\alpha})$ in Table \ref{tableE}, except the case $(1-\alpha)y \in \R_{--}$ with $\alpha \in \R_{xe}$, the following is satisfied.
\begin{equation}\label{bijection-2}
\ln_{\alpha}(\exp_{\alpha}(y))=y  
\end{equation}
\end{lemma}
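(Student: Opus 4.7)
The plan is to verify both identities by direct substitution of Definitions \ref{def:exp} and \ref{def:log}, reducing each to one application of the real-valued power rule $(t^p)^q = t^{pq}$; the content of the lemma is precisely the list of cases in which this rule fails on a negative argument. After dispatching $\alpha = 1$ as the standard $\exp$--$\log$ inverse, substitution for $\alpha \neq 1$ yields
\begin{equation*}
\exp_\alpha(\ln_\alpha(x)) = \bigl(x^{1-\alpha}\bigr)^{1/(1-\alpha)}, \qquad \ln_\alpha(\exp_\alpha(y)) = \tfrac{1}{1-\alpha}\bigl(((1-\alpha)y)^{1/(1-\alpha)}\bigr)^{1-\alpha},
\end{equation*}
each of which is formally $x$ or $y$ once the exponents telescope.

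The key sub-lemma I would invoke is a sign rule for rational powers: for $r = p/q$ in lowest terms, the map $t \mapsto t^r$ preserves the sign of negative $t$ when $r \in \R_o$, collapses negatives to positives when $r \in \R_e \setminus \{0\}$, and is undefined on negatives when $r \in \R_{xe} \cup \R_{xx}$. Combined with the inverse relations $\R_o^{-1} = \R_o$ and $(\R_e \setminus \{0\})^{-1} = \R_{xe}$ already recorded in Section \ref{sec1}, this pins down exactly when the inner and outer exponents in the displayed expressions fail to telescope on a negative input.

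For identity \eqref{bijection-1}, the case $x \geq 0$ is immediate, so only $x \in \R_{--}$ needs inspection. If $1-\alpha \in \R_o$ (i.e.\ $\alpha \in \R_e$) both exponents are sign-preserving and the composition returns $x$; if instead $1-\alpha \in \R_e \setminus \{0\}$, equivalently $\alpha \in \R_o \setminus \{1\}$, then $x^{1-\alpha} > 0$ and the subsequent $\R_{xe}$-exponent yields $|x| \neq x$, which is the stated exception. For \eqref{bijection-2}, applying the same sign rule to $u = (1-\alpha)y$ isolates the failure to the case $u \in \R_{--}$ with $1/(1-\alpha) \in \R_e$; by $(\R_e \setminus \{0\})^{-1} = \R_{xe}$ this reads $1-\alpha \in \R_{xe}$, and since $\R_{xe}$ is invariant under $a \mapsto 1-a$, it is exactly $\alpha \in \R_{xe}$.

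The main obstacle is the simultaneous bookkeeping in each case: one must track whether the argument lies in the admissible domain prescribed by Table \ref{tableE} or \ref{tableL}, the parity class of $1/(1-\alpha)$ obtained from the inverse relations, and the sign of the intermediate quantity $x^{1-\alpha}$ or $u^{1/(1-\alpha)}$. The payoff of the partition $\R_e,\R_o,\R_{xe},\R_{xx}$ in \eqref{Rclass} is precisely that once these three data are aligned, no individual case requires more than a one-line sign check, so the tabular layout of Tables \ref{tableE}--\ref{tableL} reduces the verification to a near-mechanical lookup.
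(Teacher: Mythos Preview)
Your proposal is correct and shares the paper's core idea: after disposing of $\alpha=1$, both arguments reduce to asking when $(t^{p})^{q}=t^{pq}$ fails for a negative base. The paper organizes the casework differently, however. It first verifies from Tables~\ref{tableE}--\ref{tableL} that $\hbox{ran}(\ln_\alpha)\subseteq\dom(\exp_\alpha)$ and $\hbox{ran}(\exp_\alpha)\subseteq\dom(\ln_\alpha)$, so the compositions are globally well-defined, and then splits on $1-\alpha\in\R_o\cup\R_{xx}$ (both maps strictly monotone on their full domains, hence the identities hold) versus $1-\alpha\in\R_e\cup\R_{xe}$ (where the ``inherent square'' of the exponent collapses sign on negatives). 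You instead split first on the sign of the argument and package the parity analysis into an explicit sign-rule sub-lemma, reading off the failure classes via $\R_o^{-1}=\R_o$ and $(\R_e\setminus\{0\})^{-1}=\R_{xe}$. Your decomposition is somewhat more self-contained, since the sub-lemma plus the inverse relations do all the work without a separate range-inclusion check; the paper's version is terser but leans more heavily on table look-up and the monotonicity framing. One point you leave implicit and should state once: when $1-\alpha\in\R_x$, Tables~\ref{tableE}--\ref{tableL} already force $x\ge 0$ (respectively $(1-\alpha)y\ge 0$), so those parity classes never reach your negative-argument branch --- a one-line remark makes your case split visibly exhaustive.
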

\begin{proof}
When $\alpha=1$, the extended logarithmic function~\eqref{exlog} and the extended exponential function~\eqref{exexp} become the conventional logarithmic and exponential function. Now, let us assume that $\alpha \not= 1$. From Table \ref{tableE} and Table \ref{tableL}, it is easy to check that $\hbox{ran}(\ln_{\alpha}) \subseteq \dom(\exp_{\alpha})$. Also, we have $\hbox{ran}(\exp_{\alpha}) \subseteq \dom(\ln_{\alpha})$. Therefore, we only need to check one-to-one condition. When $1-\alpha \in \R_o \cup \R_{xx}$, it is easy to check that $\ln_{\alpha}(x) = \frac{1}{1-\alpha}x^{1-\alpha}$ and $\exp_{\alpha}(y) = [(1-\alpha)y]^{1/(1-\alpha)}$ are strictly monotonic function on their domains depending on the choice of $\alpha \in \R \setminus \{ 1 \}$. Hence, one-to-one condition is automatically satisfied. However, when $1-\alpha \in \R_e \cup \R_{xe}$, the condition is rather complicated. Let  $x,y \in \R_+$ with $\alpha<1$ or $x,y \in \R_{--}$ with $1<\alpha$, then it is easy to check one-to-one condition of $\exp_{\alpha}(\ln_{\alpha}(x))=x$ or $\ln_{\alpha}(\exp_{\alpha}(y))=y$. On the other hand, other cases (i.e., $x,y \in \R_{-}$ with $\alpha<1$ or $x,y \in \R_{++}$ with $\alpha>1$) do not satisfy one-to-one condition, due to the inherent square of the exponent in $\R_{e} \cup \R_{xe}$.
\end{proof}

\begin{lemma}\label{psiTh}
Let $\dom(\exp_{\alpha})$ be in Table \ref{table3}. Then $\dom  \Psi$ of 
$\Psi(x) = \int_d^{x} \exp_{\alpha}(\xi)d\xi$ 
is classified below. Here, we drop constant terms.
\begin{itemize}
\item $\alpha=1$:
$
 \Psi(x) = \exp(x)
$\;\;\;\;\; with  $\dom \Psi = \R$
\item $\alpha=2$: 
$
\Psi(x) =  -\log(-x)
$ with $\dom \Psi = \R_{--}$
\item $\alpha \not\in \{1,2\}$:  
$
\Psi(x) = \frac{1}{2-\alpha}[(1-\alpha)x]^{\frac{2-\alpha}{1-\alpha}}. 
$
In this case, $\dom \Psi$ is categorized as
\begin{itemize}
\item $\alpha<1$: \quad\quad $\dom  \Psi = \R$ \hskip 2.6cm if $\alpha \in \R_e$ and $\dom  \Psi = \R_+$ otherwise.
\item $1<\alpha<2$:\; $\dom  \Psi = \R_{++}$ / $\R_{--}$ \hskip 0.5cm if $\alpha \in \R_e$ and $\dom  \Psi = \R_{--}$ otherwise
\item $2<\alpha$: \quad\quad $\dom  \Psi = \R_{-}$ / $\R_+$ \hskip 0.85cm if $\alpha \in \R_e$ and $\dom  \Psi = \R_{-}$ otherwise.
\end{itemize}
\end{itemize}
\end{lemma}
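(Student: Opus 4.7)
The plan is to compute an antiderivative of $\exp_\alpha$ in closed form and then determine, by a case analysis using the classification~\eqref{Rclass}, the maximal convex set on which the resulting formula takes real values.

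First, I would dispose of the exceptional cases directly. For $\alpha=1$, one has $\exp_1=\exp$, giving $\Psi=\exp$ with $\dom\Psi=\R$. For $\alpha=2$, $(1-\alpha)=-1$, so $\exp_2(\xi)=-1/\xi$; integrating on the reduced domain $\dom(\exp_2)=\R_{--}$ from Table~\ref{table3} yields $\Psi(x)=-\log(-x)$ with $\dom\Psi=\R_{--}$.

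For $\alpha\not\in\{1,2\}$, a direct chain-rule differentiation confirms that
$$\frac{d}{dx}\left(\frac{1}{2-\alpha}\bigl[(1-\alpha)x\bigr]^{\tfrac{2-\alpha}{1-\alpha}}\right)=\bigl[(1-\alpha)x\bigr]^{\tfrac{1}{1-\alpha}}=\exp_\alpha(x),$$
so, dropping an additive constant, $\Psi$ has the claimed closed form. The remaining work is to verify the stated $\dom\Psi$ by splitting into the three regimes $\alpha<1$, $1<\alpha<2$, $\alpha>2$, each subdivided according to whether $\alpha\in\R_e$ or $\alpha\in\R\setminus\R_e$. For each sub-case I would determine (i) the sign of $(1-\alpha)x$ as $x$ varies, and (ii) whether the exponent $(2-\alpha)/(1-\alpha)$ lies in $\R_e$, $\R_o$, or $\R_x$, and then read off the largest convex subset on which $[(1-\alpha)x]^{(2-\alpha)/(1-\alpha)}$ is a real number. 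A short arithmetic check shows that for $\alpha\in\R_e$ one has $1-\alpha\in\R_o$ and $2-\alpha\in\R_e$, whence $(2-\alpha)/(1-\alpha)\in\R_e$, which makes the power defined on all of $\R$ when its exponent is positive and on $\R\setminus\{0\}$ (two components) when negative. For $\alpha\in\R\setminus\R_e$ the exponent generally falls outside $\R_e\cup\R_o$, forcing the domain to be a half line determined by $\mathrm{sign}(1-\alpha)$.

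The main obstacle, I expect, is the regime $\alpha>2$. There $1/(1-\alpha)<0$, so $\exp_\alpha$ blows up at $0$ and $0\notin\dom(\exp_\alpha)$, yet the lemma claims $\dom\Psi=\R_-$ or $\R_+$, which includes the origin. To justify this I would use the identity $(2-\alpha)/(1-\alpha)=1+1/(1-\alpha)\in(0,1)$ for $\alpha>2$, which makes the antiderivative extend continuously to $\Psi(0)=0$ and, equivalently, makes the improper integral $\int_0^{\pm\varepsilon}\exp_\alpha(\xi)\,d\xi$ convergent (the integrand is $O(|\xi|^{-1/(\alpha-1)})$ with $1/(\alpha-1)<1$). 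The contrast with $1<\alpha<2$, where the exponent $(2-\alpha)/(1-\alpha)$ is negative and the closed form diverges at the origin, explains why in that range $\dom\Psi$ must stay open at $0$ (equal to $\R_{--}$ or $\R_{++}$). The two sign choices in the $\alpha\in\R_e$ sub-cases simply reflect the two convex branches of $\dom(\exp_\alpha)$ on which the integration is performed.
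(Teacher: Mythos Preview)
Your organizing criterion --- the maximal convex subset of $\R$ on which the closed form is real --- is not what determines $\dom\Psi$, and it gives the wrong answer in a concrete sub-case. Take any $\alpha<1$ with $1-\alpha\in\R_{xe}$ (for instance $\alpha=1/2$). Then $(2-\alpha)/(1-\alpha)\in\R_o$, so $\frac{1}{2-\alpha}[(1-\alpha)x]^{(2-\alpha)/(1-\alpha)}$ is a real number for every $x\in\R$; your rule would output $\dom\Psi=\R$, but the lemma asserts $\dom\Psi=\R_+$. Your hedge that for $\alpha\in\R\setminus\R_e$ ``the exponent generally falls outside $\R_e\cup\R_o$'' is exactly where the argument breaks: the paper checks that $1-\alpha\in\R_{xe}$ forces the exponent into $\R_o$.

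The missing ingredient is the constraint coming from $\dom(\exp_{\alpha})$ in Table~\ref{table3}. The paper first notes that $\exp_{\alpha}$ is monotone on its reduced domain, so $\Psi$ is convex there, and then invokes the standard inclusion
\[
int(\dom\Psi)\ \subseteq\ \dom\partial\Psi = \dom(\exp_{\alpha})\ \subseteq\ \dom\Psi .
\]
This pins down $int(\dom\Psi)=int(\dom(\exp_{\alpha}))$ in every case, so for $\alpha=1/2$ one is forced to $int(\dom\Psi)=\R_{++}$ regardless of how far the algebraic formula happens to extend. Only the question of whether to include the boundary point $0$ (open versus closed half-line) then rests on whether the closed form stays finite there --- which is essentially your integrability observation for $\alpha>2$, but now applied as a secondary step after the interior has already been fixed by the inclusion. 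Without that inclusion, the ``where is the formula real'' test cannot distinguish the lemma's $\R_+$ from the spurious $\R$.
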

\begin{proof}
By simple calculation, if $1-\alpha \in \R_{xe}$ then $\frac{2-\alpha}{1-\alpha} \in \R_o$ and if $\alpha \in \R_e$ then $\frac{2-\alpha}{1-\alpha} \in \R_e$. As noticed in Lemma \ref{lemma4}, $\exp_{\alpha}$ is monotonically increasing function on its domain in Table \ref{table3}. Therefore, $\Psi(x)$ is a convex function~\citep{hir96} and thus the domain of $\Psi$ need to be defined to satisfy the following equation~\citep{roc70}:
\begin{equation}\label{dompsi}
int(\dom  \Psi) \subseteq \dom \partial \Psi = \dom (\exp_{\alpha}) \subseteq \dom  \Psi,
\end{equation}
where $\dom(\exp_{\alpha})$ is defined as in Table \ref{table3}. Based on \eqref{dompsi} and Table \ref{table3}, we summarize $\dom\Psi$ as follows: 
\begin{itemize}
\item $\alpha<1$: $\frac{2-\alpha}{1-\alpha}>1$ and thus
\begin{itemize}
\item $1-\alpha\in \R_{xe}$: $\frac{2-\alpha}{1-\alpha} \in \R_o$ and thus $\dom \partial\Psi = \R_+$ and $\dom \Psi=\R_+$. 
\item $1-\alpha \in \R_{xx} \cup \R_e$: $\frac{2-\alpha}{1-\alpha} \in \R_x$ and thus $\dom \Psi = \R_+.$
\item $1-\alpha \in \R_o$: $\frac{2-\alpha}{1-\alpha} \in \R_e$ and thus $\dom \Psi = \R$. 
\end{itemize}
\item $1<\alpha<2$: $\frac{2-\alpha}{1-\alpha}<0$ and thus
\begin{itemize}
\item $1-\alpha \in \R_{xe}$:  $\frac{2-\alpha}{1-\alpha} \in \R_o$ and the assumption that the domain should be convex, irrespective of convexity of the function, we need to select one between $\R_{++}$ and $\R_{--}.$ From Table \ref{table3}, we have $\dom  \partial\Psi = \R_{--}$ and thus by \eqref{dompsi}, we need to choose $\dom \Psi = \R_{--}.$
\item $1-\alpha \in \R_{xx}\cup\R_e$: It is natural to restrict $\dom \Psi = \R_{--}$, since $1<\alpha$.
\item $1-\alpha \in \R_o$: $\frac{2-\alpha}{1-\alpha} \in \R_e$ and thus $\dom \Psi = \R_{++}$ or $\R_{--}.$ Both are well matched with $\dom \partial\Psi$ in Table \ref{table3}.
\end{itemize}
\item $2<\alpha$: $1<\alpha$ and $\frac{2-\alpha}{1-\alpha}>0$. Therefore, we have
\begin{itemize}
\item $1-\alpha \in \R_{xe}:$ $\frac{2-\alpha}{1-\alpha} \in \R_o$ and thus we have $\dom \Psi = \R_-.$ 
\item $1-\alpha \in \R_{xx} \cup \R_e:$ $\frac{2-\alpha}{1-\alpha} \in \R_x$ and thus we naturally select $\dom \Psi = \R_{-}.$
\item $1-\alpha \in \R_o:$ Since $\frac{2-\alpha}{1-\alpha} \in \R_e$, we have $\dom \partial\Psi = \R_{--}$ or $\R_{++}.$ Due to \eqref{dompsi}, we have
$int(\dom \Psi) = \R_{--}$ or $\R_{++}.$ Hence, $\dom \Psi = \R_{-}$ or $\R_+.$
\end{itemize}
\end{itemize}
\end{proof}

\end{document}